%% file: bmvc_final.tex
\documentclass{bmvc2k}
\usepackage{placeins}
\usepackage[table]{xcolor}
\usepackage{multirow}
\definecolor{oursgreen}{RGB}{226,245,226}
\usepackage{xcolor}
\usepackage{amssymb}

\usepackage{soul}

\usepackage{placeins}
\usepackage[table]{xcolor}
\usepackage{multirow}
\usepackage{amssymb}
\usepackage{amsmath}
\usepackage{amsthm}

\definecolor{momo}{RGB}{255,210,225}

\title{Stochastic Nonlinearities Improve Uncertainty Estimation}

\addauthor{Mohammad Sarhangzadeh}{m.sarhangzadeh@bilkent.edu.tr}{1}
\addauthor{Amirreza M. Shebly}{amirreza@bilkent.edu.tr}{1}
\addauthor{Doruk Oner}{doruk.oner@bilkent.edu.tr}{1}

\addinstitution{
NeuraVision Lab, Bilkent University\\
Ankara, Turkey
}

\runninghead{Sarhangzadeh \etal}{Stochastic Activations for Uncertainty}

\def\etal{\emph{et al}\bmvaOneDot}

\begin{document}

\maketitle
\input{Sections/abstract}
\input{Sections/Intro}

\input{Sections/RelatedWork}
\input{Sections/Method}
\input{Sections/Experiments}
\input{Sections/Ablation}
\input{Sections/Conclusion}

\FloatBarrier
\bibliography{egbib}

\clearpage
\input{Sections/supp}

\end{document}

%% file: Sections/abstract.tex
\begin{abstract}
Reliable uncertainty estimation is important across visual recognition tasks, providing information about the confidence and reliability of model predictions.
Many uncertainty estimation methods obtain predictive diversity through mechanisms that can suppress intermediate features, perturb the input, or require substantial additional computation. 
Stochastic activation functions, where network units randomly select between multiple nonlinearities during a forward pass, have previously been used to improve optimization behavior, mitigate limitations of fixed ReLU activations, and increase output diversity. 
In this work, we reframe stochastic activation switching as a lightweight mechanism for uncertainty estimation. 
The proposed method samples between nonlinear activation functions, inducing predictive diversity while keeping intermediate feature pathways active. 
This enables uncertainty estimation from a single trained model without the training cost of ensembles, task-specific uncertainty heads, or architectural redesign.
We evaluate the proposed approach across classification, semantic segmentation, and out-of-distribution detection using convolutional and transformer-based architectures on diverse datasets.
Across these settings, stochastic activation switching improves uncertainty quality while preserving competitive primary-task performance, showing that stochastic nonlinearities provide an effective and information-preserving source of predictive variation for visual recognition.
\end{abstract}

%% file: Sections/Intro.tex
\section{Introduction}
Deep neural networks (DNNs) \cite{goodfellow2016deep} have driven major advances in computer vision, achieving state-of-the-art performance across image classification \cite{krizhevsky2012imagenet}, object detection \cite{redmon2016lookonceunifiedrealtime, ren2016fasterrcnnrealtimeobject}, semantic segmentation, and dense prediction tasks \cite{ronneberger2015unetconvolutionalnetworksbiomedical, minaee2021image}. 
Despite this progress, vision models frequently produce overconfident and poorly calibrated predictions \cite{guo2017calibration}, even when confronted with ambiguous inputs, corrupted samples, or out-of-distribution data \cite{hendrycks2016gaussian}. 
This inability to reliably ``know what it does not know'' remains an obstacle to deploying deep learning systems in real-world visual recognition pipelines, where overconfident errors can lead to unreliable downstream decisions.
Uncertainty estimation addresses this limitation by augmenting predictions with confidence measures that help identify unreliable outputs \cite{kendall2017uncertainties}. 
However, obtaining uncertainty estimates that are accurate, well-calibrated, computationally efficient, and compatible with modern architectures remains a challenging problem~\cite{abdar2021review}.

A common strategy for uncertainty estimation is to generate multiple predictions for the same input and measure their disagreement. 
Deep ensembles~\cite{lakshminarayanan2017simple} achieve this by training multiple independent models and are often considered a strong baseline for predictive uncertainty. 
However, training and storing multiple models is computationally expensive.
Single-model alternatives such as Monte Carlo dropout~\cite{gal2016dropout} reduce this cost by injecting stochasticity into the network during inference. 
Nevertheless, dropout-based methods obtain diversity by randomly suppressing features, which can disrupt information flow and degrade the intermediate representations required for accurate classification or dense prediction. 
This trade-off motivates the need for uncertainty estimation mechanisms that can induce predictive diversity without disrupting learned feature representations.

\begin{figure}
\centering
\includegraphics[width=1.00\linewidth]{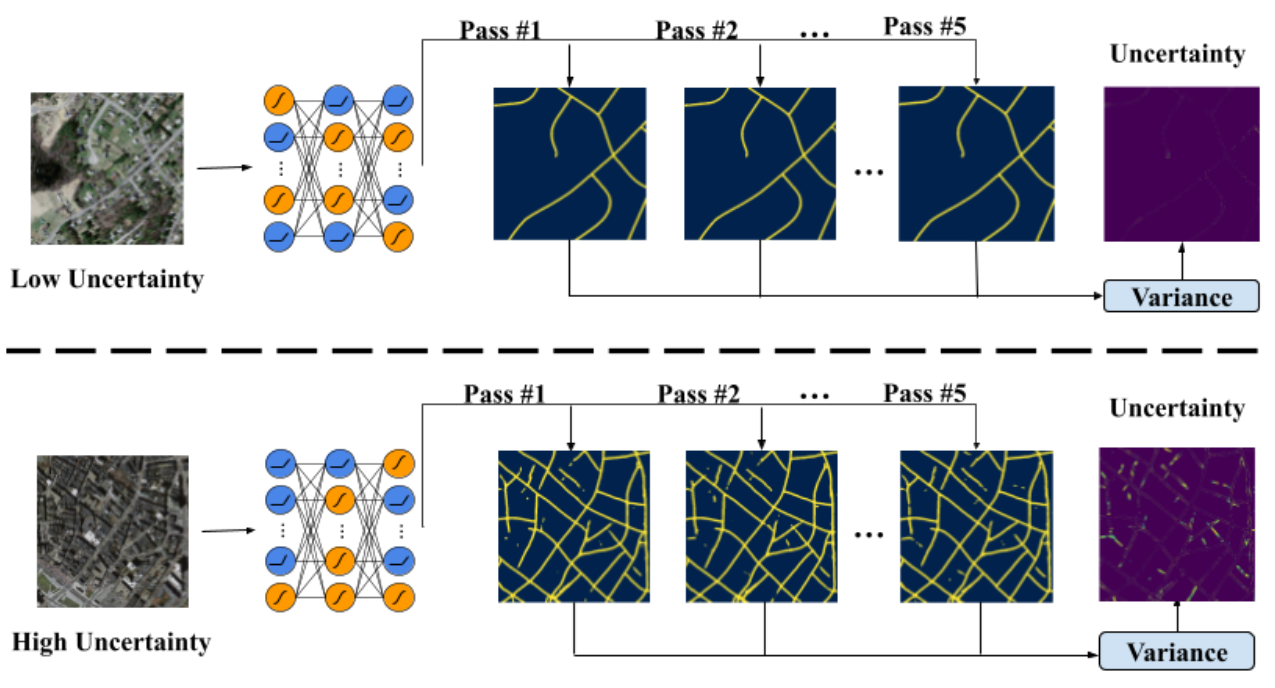}
\caption{
\textbf{Uncertainty from stochastic nonlinearities.}
Stochastic forward passes are shown for a network trained to predict road maps.
Clear roads produce similar outputs and low variance, while ambiguous roads and road-like buildings produce higher variance.
}
\label{fig:uncertainty_visualization}
\end{figure}

Stochastic activation functions have previously been explored as a means to improve gradient flow, introduce regularization, and increase predictive diversity in deep networks. 
Instead of using a fixed nonlinearity for each neuron or layer, stochastic activation methods allow activation units to randomly select from a predefined set of nonlinear functions during each forward pass. 
Prior work has used this randomness to improve robustness, alleviate the dead-unit problem, and encourage more diverse network behavior~\cite{Electrica_2021_486148,gulcehre2016noisy,xu2015empirical}. 
However, these methods have primarily been studied from the perspective of optimization and regularization, rather than a mechanism for uncertainty estimation.

In this work, we reframe stochastic activation functions as a mechanism for uncertainty quantification.
Our main observation is that stochastic switching of nonlinearities can introduce useful predictive variation while preserving the flow of information through the network. 
Rather than removing features, the method keeps intermediate representations available and perturbs the nonlinear transformation applied to them. 
As a result, a single trained model can produce multiple plausible outputs without discarding intermediate information (Fig.~\ref{fig:uncertainty_visualization}).
This property is particularly relevant for image classification and semantic segmentation, where preserving semantic and spatial structure is important for reliable predictions.

We propose a simple framework for uncertainty estimation based on stochastic activation switching. 
The method does not require training multiple models or substantially redesigning the underlying architecture. 
In our experiments, we apply it to both convolutional networks and transformer-based \cite{vaswani2017attention} vision models to evaluate its applicability across different model classes. 
At inference time, repeated stochastic forward passes produce a distribution of predictions from a single set of learned weights, enabling uncertainty to be estimated from predictive disagreement. 
Because the stochasticity is introduced through activation selection rather than feature removal, the proposed approach provides an efficient alternative to feature-masking methods while avoiding the computational burden of explicit ensembles.

We evaluate the framework on classification and segmentation across diverse datasets, modalities, scales, and in- and out-of-distribution settings. We compare against common uncertainty baselines and analyze task performance and uncertainty quality, with ablations on stochastic activation and inference settings. Results show competitive predictive performance and strong uncertainty estimates from a single trained model.

The main contributions of this work are as follows: (i) We propose stochastic nonlinearities as a computationally efficient and information-preserving approach for uncertainty estimation in image classification and semantic segmentation. (ii) We show that the proposed single-model approach achieves competitive uncertainty calibration and predictive performance compared with commonly used uncertainty estimation baselines, while avoiding the cost of training multiple models. (iii) We conduct extensive experiments and ablation across small, medium, and large-scale datasets, different visual modalities, convolutional and transformer-family architectures, and both in-distribution and out-of-distribution settings.

%% file: Sections/RelatedWork.tex
\section{Related Work}
\subsection{Uncertainty Estimation}

Bayesian Neural Networks (BNNs) \cite{blundell2015weight} model epistemic uncertainty through posterior distributions over network parameters, but approximate Bayesian inference is costly at scale. 
Laplace approximations \cite{daxberger2021laplace} reduce this cost by approximating the posterior around a trained solution, though performance depends on the quality of the approximation and curvature estimate. 
Lightweight probabilistic networks \cite{gast2018lightweight} instead propagate activation uncertainty using assumed density filtering, offering a more efficient alternative but relying on approximate distributional assumptions.

Monte Carlo Dropout (MC Dropout) \cite{gal2016dropout} estimates uncertainty by keeping dropout active during inference and aggregating stochastic predictions. 
Although simple, lightweight, and model-agnostic \cite{loquercio2020general}, it obtains diversity by randomly suppressing features, so each stochastic pass discards part of the learned representation rather than transforming the same information differently. Variants such as MC-DropConnect \cite{mobiny2021dropconnect}, which stochastically masks network weights, Concrete Dropout \cite{gal2017concrete}, which learns dropout rates, and R-drop \cite{wu2021r}, which uses auxiliary regularization, have also been proposed, but their uncertainty still comes from removing information rather than preserving and transforming it.

Deep Ensembles \cite{lakshminarayanan2017simple} estimate uncertainty from disagreement among multiple independently trained networks. 
While effective, their diversity comes at the cost of repeated training, storage, and inference, with the computational burden typically scaling with the number of ensemble members. 
Efficiency-driven alternatives reduce this overhead by using fixed partially overlapping masks as in Masksembles \cite{durasov2021masksembles}, training checkpoints \cite{huang2017snapshot}, low-rank parameter sharing \cite{wen2020batchensemble}, or multi-branch architectures \cite{teerapittayanon2016branchynet}, but fully independent ensembles often remain stronger in calibration and reliability \cite{ovadia2019can}.

Deterministic single-forward-pass methods provide an alternative that avoids repeated stochastic inference, including DUQ \cite{van2020uncertainty} and SNGP \cite{liu2020simple}.
More recently, Deep Deterministic Uncertainty (DDU) \cite{mukhoti2023deep} combines a regularized feature representation with post-hoc density estimation to obtain uncertainty from a single deterministic forward pass.
Although these methods offer low inference cost, their uncertainty depends on the learned feature geometry and the assumptions of the distance based estimator. 

Other single-model methods estimate uncertainty through input or model perturbations at inference time. 
Test-time augmentation (TTA) \cite{wang2019aleatoric} uses transformed inputs, but requires manually chosen augmentations and repeated inference. 
Monte Carlo Batch Normalization \cite{mukhoti2020batch} samples stochastic batch-normalization statistics, while iterative methods \cite{durasov2024enabling} use repeated refinement or convergence behavior. 
HCM \cite{choi2026uncertainty} provides a sampling-free alternative based on hyperspherical confidence, but depends on its geometric formulation and optimization. 
These methods reduce the need for independently trained ensembles, but rely on input transformations, normalization noise, iterative refinement, or geometric constraints.

\subsection{Stochastic activations}
Stochastic activations have traditionally been used for regularization, optimization, and, in recent work, generative diversity. Early forms, such as dropout \cite{hinton2012improvingneuralnetworkspreventing}, can be interpreted as stochastic activation mechanisms that prevent co-adaptation of features. For optimization, methods such as those proposed by \cite{gulcehre2016noisy} address specific pathologies in ReLU networks, including the ``dying ReLU'' phenomenon \cite{xu2015empirical}, by injecting controlled randomness to preserve gradient flow in saturated regions and improve convergence stability.

Recent studies, including \cite{lomeli2025stochastic} and others exploring noise injection and adaptive activations \cite{duan2024optimized, hammad2024deep}, explored stochastic activations primarily to enhance training dynamics, sparsity, and runtime adaptability. These methods switch between activation functions to improve optimization efficiency and occasionally preserve stochasticity during inference for improved generative behavior. In language and generative modeling, Lomeli et al.~\cite{lomeli2025stochastic} demonstrated that such test-time stochasticity can serve as an alternative to temperature sampling \cite{holtzman2019curious} for increasing output diversity.

A related activation-based approach, Random Activation Functions (RAF) \cite{stoyanova2023toward}, uses activation-function diversity for uncertainty estimation through an ensemble of independently trained models, each assigned a randomly selected activation function that remains fixed within that model. Predictive uncertainty is estimated from disagreement across the ensemble members. In contrast, our method dynamically samples activation choices within a single model during both training and inference. Our work therefore leverages stochastic activations explicitly for uncertainty estimation. By maintaining activation stochasticity during inference, a single model can generate a functionally diverse set of predictions, providing a measure of combined predictive uncertainty without feature masking or architectural duplication. Similar to MC-Dropout, this uncertainty is derived from disagreement across repeated stochastic forward passes and is not explicitly decomposed into separate epistemic and aleatoric components. This reframes stochastic activations not only as a training mechanism, but also as an efficient, information-preserving path to reliable uncertainty estimation.

%% file: Sections/Method.tex
\section{Method}
\label{sec:method}

\subsection{Motivation}
As discussed in the previous sections, existing uncertainty estimation methods often obtain predictive diversity by either suppressing features, training multiple models, perturbing the input, or introducing additional inference procedures. 
These choices can make uncertainty estimation less information-preserving, computationally expensive, or dependent on design choices that must be tuned for each architecture and dataset. 
Our goal is to obtain diverse predictions from a single trained model without removing intermediate features, duplicating the network, or introducing additional trainable parameters. 

We achieve this by replacing deterministic activation functions with stochastic nonlinearities. 
Instead of assigning a fixed activation function to each unit, the network randomly switches between the backbone's original activation and an alternative nonlinearity during each forward pass. 
The model is trained with the same stochastic mechanism used at inference time, so the learned weights become robust to activation-level variation rather than treating it as a test-time perturbation only.
At test time, the weights are kept fixed and multiple stochastic forward passes are performed through the same model. 
The resulting prediction variability is then used to estimate uncertainty.

This formulation introduces diversity through transformations rather than feature removal. Every activation pathway remains active, and stochasticity only changes how the intermediate signal is transformed. As illustrated in Fig.~\ref{fig:uncertainty_quality}, this leads to more structured uncertainty than feature masking when stochasticity is increased at inference time. Therefore, the network can sample different functional behaviors while preserving the information carried by its features.
The method does not require model-specific uncertainty heads, additional learnable parameters, or repeated model training, making it simple to apply across different classification and segmentation architectures.

\begin{figure}[h]
\centering
\includegraphics[width=0.9\columnwidth]{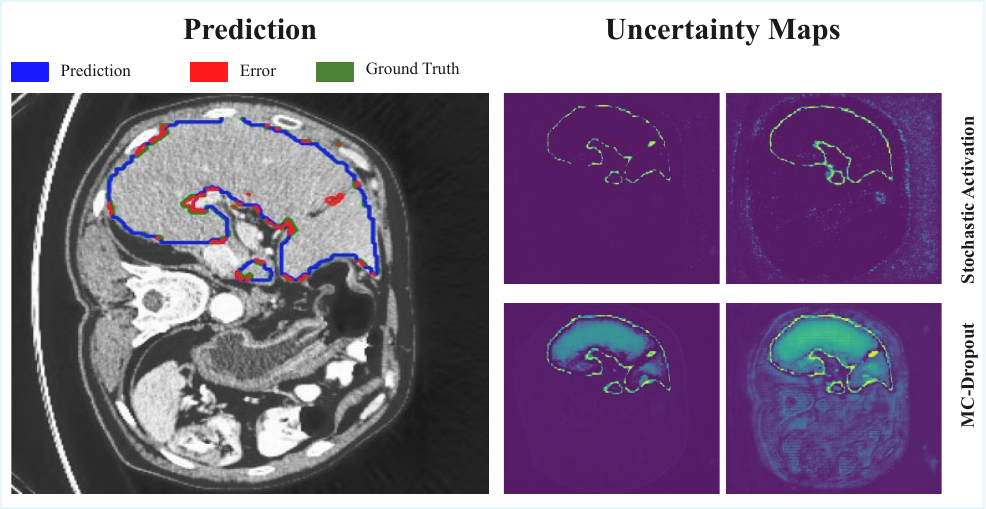}
\caption{
\textbf{Uncertainty quality under test-time stress.}
The left panel shows the input, prediction, ground truth, and errors; the right panels compare uncertainty maps from stochastic activation switching and MC Dropout, with brighter values indicating higher uncertainty.
We increase stochasticity only at inference time to examine its effect on uncertainty.
MC Dropout removes intermediate features, causing uncertainty to spread broadly, whereas stochastic activation switching keeps feature pathways active and concentrates uncertainty near boundaries and prediction errors.
This supports our motivation that activation-level stochasticity can induce predictive diversity without disrupting intermediate representations.
}
\label{fig:uncertainty_quality}
\end{figure}

\subsection{Stochastic Activation Switching}
\label{sec:stochastic_activation_switching}

Let $f_{\theta}$ denote a neural network with parameters $\theta$. 
In a standard deterministic network, each pre-activation $z_{j}^{(l)}$ at unit $j$ in layer $l$ is passed through a fixed activation function chosen by the backbone architecture. 
We replace this fixed choice with a stochastic activation switch. 
During each stochastic forward pass $t$, the unit applies either the original activation of the backbone, denoted by $\sigma_{\mathrm{orig}}^{(l)}$, or an alternative nonlinearity $\sigma_{\mathrm{alt}}$:
\begin{equation}
h_{j}^{(l,t)} =
\begin{cases}
\sigma_{\mathrm{orig}}^{(l)}\!\left(z_{j}^{(l,t)}\right), & \text{with probability } 1-p,\\[4pt]
\sigma_{\mathrm{alt}}\!\left(z_{j}^{(l,t)}\right), & \text{with probability } p.
\end{cases}
\label{eq:activation_switching}
\end{equation}
Equivalently, let $m_{j}^{(l,t)} \sim \mathrm{Bernoulli}(p)$ denote the binary activation-switching variable. 
Then Eq.~\ref{eq:activation_switching} can be written as
\begin{equation}
h_{j}^{(l,t)}
=
\left(1-m_{j}^{(l,t)}\right)
\sigma_{\mathrm{orig}}^{(l)}\!\left(z_{j}^{(l,t)}\right)
+
m_{j}^{(l,t)}
\sigma_{\mathrm{alt}}\!\left(z_{j}^{(l,t)}\right).
\label{eq:bernoulli_activation_switching}
\end{equation}
Here, $p$ controls the probability of switching away from the backbone activation. 
In our experiments, $\sigma_{\mathrm{orig}}^{(l)}$ is the native activation used by the architecture, such as ReLU in convolutional networks or GELU in transformer-family models, while $\sigma_{\mathrm{alt}}$ is chosen as an alternative differentiable nonlinearity.
Unless otherwise stated, we use $p=0.5$, corresponding to uniform switching between the two nonlinearities. This choice maximizes the entropy of the Bernoulli activation selection, giving both activation pathways equal probability and avoiding a systematic preference for either nonlinearity.

The sampling operation can be applied at different levels of granularity, leading to different forms of functional diversity:
\begin{itemize}
    \item \textbf{Node-level:} The most granular approach, where a unique activation is sampled for each individual activation unit $j$ in layer $l$.
    
    \item \textbf{Channel-level:} A strategy primarily for convolutional networks, where a single activation is sampled for each channel $c$ in layer $l$ and shared across all spatial locations within that channel.
    
    \item \textbf{Layer-level:} The coarsest strategy, where one activation is sampled for the entire layer $l$ and applied to all units in that layer.
\end{itemize}

Regardless of the sampling granularity, each stochastic forward pass defines a different activation pathway through the same trained weights. 
For an input $x$, the $t$-th pass produces:
\begin{equation}
\bar{y}^{(t)} = f_{\theta}\!\left(x;\sigma^{(t)}\right)
\label{eq:stochastic_forward_pass}
\end{equation}
where $\sigma^{(t)}$ denotes the sampled activation configuration used during that pass. 
Over $T$ stochastic passes, the model generates a sequence of predictions
\begin{equation}
Y(x)=\left\{\bar{y}^{(1)},\bar{y}^{(2)},\dots,\bar{y}^{(T)}\right\}
\label{eq:stochastic_activation_trajectory}
\end{equation}
all produced by the same parameters $\theta$ but different activation pathways. 
We refer to this sequence as a \textit{stochastic activation trajectory}.

Figure~\ref{fig:toy_regression} provides an intuitive example of this behavior in a toy regression setting with heteroscedastic noise.
Compared with MC-Dropout, our stochastic activation trajectories remain more stable where the data are well supported, while their variation increases in noisier regions and outside the training range.

\begin{figure}[t]
\centering
\includegraphics[width=0.9\columnwidth]{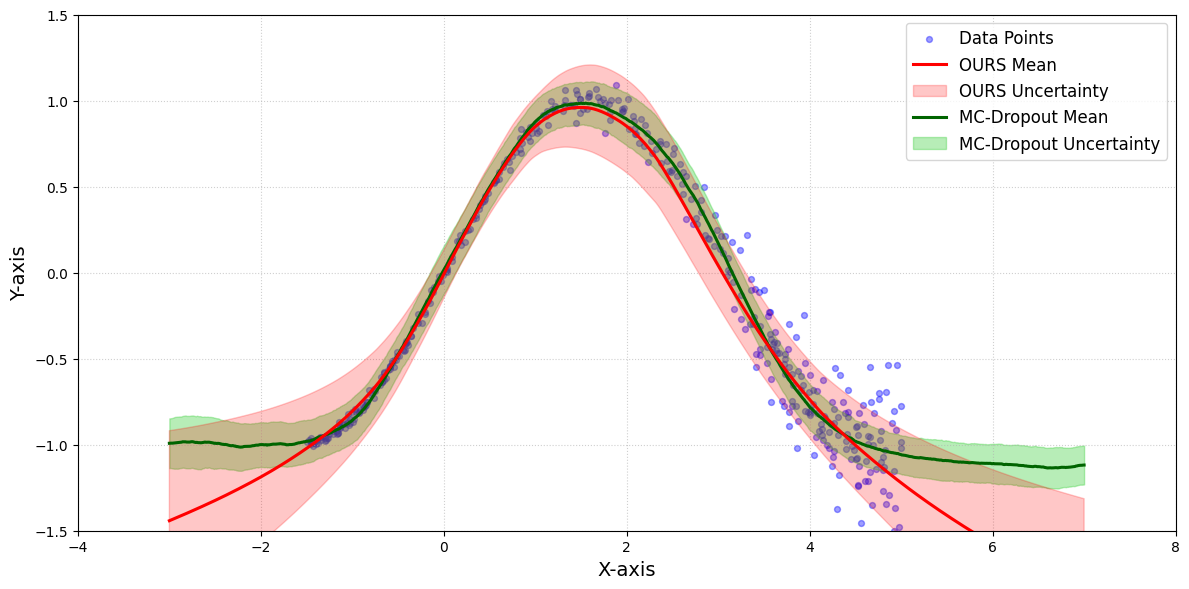}
\caption{
\textbf{Toy regression under heteroscedastic noise.}
A 3-layer MLP is trained on noisy sinusoidal samples, and stochastic activation switching is compared with MC-Dropout under the same setup.
Both methods use repeated stochastic passes.
Our method shows clearer combined predictive uncertainty, remaining more certain in well-supported regions while uncertainty increases with stronger noise and outside the observed range.
}
\label{fig:toy_regression}
\end{figure}

\subsection{Training and Uncertainty Estimation}
\label{sec:training_uncertainty}

Training follows the standard supervised learning procedure for the target task. 
Given an input--label pair $(x,y)$, the model performs a stochastic forward pass using activation choices sampled according to Eq.~\ref{eq:activation_switching}. 
The network parameters $\theta$ are optimized using the task loss:
\begin{equation}
\min_{\theta} \; \mathcal{L}\!\left(f_{\theta}(x; \sigma^{(t)}), y\right),
\label{eq:training_objective}
\end{equation}
where $\sigma^{(t)}$ denotes the sampled activation configuration for forward pass $t$. 
No additional uncertainty objective, auxiliary head, or architecture-specific parameter is introduced. 
The same stochastic activation mechanism is used during training and inference, but the weights are updated only during training.

At inference time, the learned weights $\theta$ are fixed. 
For an input $x$, we perform $T$ stochastic forward passes by repeatedly sampling activation configurations.
Each pass produces a probability prediction $p^{(t)}(x) = f_{\theta}\!\left(x; \sigma^{(t)}\right)$,
where $p^{(t)}(x)$ denotes class probabilities for classification or pixel-wise class probabilities for segmentation. 
At inference time, the final prediction and uncertainty are computed as
\begin{equation}
\hat{\mu}(x)=\frac{1}{T}\sum_{t=1}^{T} p^{(t)}(x),
\qquad
\hat{U}(x)=\frac{1}{T}\sum_{t=1}^{T}
\left(p^{(t)}(x)-\hat{\mu}(x)\right)^2.
\label{eq:predictive_mean_uncertainty}
\end{equation}
Here, $\hat{\mu}(x)$ is used as the final prediction, and $\hat{U}(x)$ denotes the empirical variance across stochastic predictions. 
For classification, this variance is computed over class probabilities. 
For segmentation, it is computed at each pixel, producing a spatial uncertainty map. 
High uncertainty indicates that different stochastic activation configurations lead to inconsistent predictions, while low uncertainty indicates stable predictions across activation choices.

To analyze whether the stochastic mechanism preserves information more effectively than feature-masking approaches, we introduce \textbf{Information Destruction Rate (IDR)} as a diagnostic measure. 
IDR is not used as a training objective; it is used only to compare the effect of inference-time stochasticity on feature representations. 
Let $I_{\mathrm{full}}^{(l)} = I(h_{\mathrm{in}}^{(l)}; h_{\mathrm{out}}^{(l)})$ denote the estimated mutual information \cite{zeevi2025rate} between the input and output of layer $l$ under the deterministic forward pass. 
For stochastic pass $t$, let $I_t^{(l)}$ be the corresponding mutual-information estimate under stochastic inference. 
We define
\begin{equation}
\mathrm{IDR}^{(l)}
=
1 - \frac{1}{T}\sum_{t=1}^{T}
\frac{I_t^{(l)}}{I_{\mathrm{full}}^{(l)}}.
\label{eq:idr}
\end{equation}
Lower IDR indicates that stochastic inference preserves more of the deterministic layer-wise input--output information. 
We use this analysis to compare stochastic activation switching with feature-masking baselines such as MC Dropout \cite{gal2016dropout}, where uncertainty is induced by removing activations rather than transforming them.

%% file: Sections/Experiments.tex
\section{Experiments}
\label{sec:experiments}

\subsection{Experimental Setup}
\label{sec:experimental_setup}

We evaluate the proposed stochastic activation switching framework across image segmentation and image classification tasks. 
The experiments are designed to test whether the method provides reliable uncertainty estimates while maintaining competitive task performance across different domains, architectures, and distribution shifts. 
For fair comparison, all methods are trained and evaluated using the same dataset splits, input resolution, batch size, preprocessing, augmentation pipeline, backbone architecture, and optimization schedule within each task, unless a baseline requires its standard uncertainty-specific modification.

\textbf{Baselines.}
We compare against MC Dropout \cite{gal2016dropout}, DDU \cite{mukhoti2023deep}, TTA \cite{wang2019aleatoric}, Laplace Redux \cite{daxberger2021laplace}, Masksembles \cite{durasov2021masksembles}, Deep Ensembles \cite{lakshminarayanan2017simple}, and, for segmentation, an Iterative Network baseline. 
MC Dropout, TTA, Masksembles, and our method are single-model multi-pass methods, while Laplace Redux applies a post-hoc posterior approximation to a trained model and DDU applies post-hoc density estimation to learned features. 
Deep Ensembles require five independently trained models, Masksembles uses five fixed masks, and the Iterative baseline uses three refinement iterations~\cite{durasov2024enabling}. 
Unless otherwise stated, our method and MC Dropout use five stochastic forward passes, TTA uses five augmented inference samples.
For Laplace Redux, we follow the sampling configuration appropriate to each dataset and backbone. 
We report relative training and inference cost where appropriate.

\textbf{Datasets.}
For segmentation, we use Massachusetts Roads \cite{MnihThesis}, DRIVE \cite{staal2004ridge}, CREMI \cite{funke2018large}, and LiTS \cite{bilic2023liver}, with RoadTracer \cite{bastani2018roadtracerautomaticextractionroad}, CHASE-DB1 \cite{fraz2012ensemble}, and ISBI 2012 \cite{cardona2010drosophila} used for cross-dataset OOD evaluation. 
For classification, we use CIFAR-10 \cite{krizhevsky2009learning}, MiniImageNet \cite{vinyals2016matching}, BloodMNIST, and PathMNIST \cite{yang2021medmnist, yang2023medmnist}, with SVHN \cite{netzer2011reading} and CIFAR-100 \cite{krizhevsky2009learning} used as OOD datasets for CIFAR-10 and MiniImageNet, respectively. , and BloodMNIST and PathMNIST used as cross-domain OOD sets for each other.

\textbf{Backbones.}
For segmentation, we use U-Net-based architectures \cite{ronneberger2015unetconvolutionalnetworksbiomedical} for Massachusetts Roads, DRIVE, and LiTS, and SwinTransformer V2 \cite{liu2022swin, conde2022swin2sr} for CREMI. 
For classification, we use ResNet-style convolutional backbones for CIFAR-10, and MiniImageNet. 
For high-resolution MedMNIST classification, we use SwinTransformer V2 \cite{liu2022swin, conde2022swin2sr}. 
In all cases, our method keeps the same backbone as the corresponding baseline and only replaces deterministic activations with stochastic activation switching.
Unless otherwise stated, stochasticity is applied at the node level, as this provides the best overall trade-off in our granularity ablation, as shown in Section~\ref{sec:ablation_analysis}. 
For segmentation experiments, we switch between the backbone's native activation and Tanh, while for classification experiments we use the backbone's native activation and SiLU \cite{elfwing2018sigmoid}, following the activation-choice ablation.

\textbf{Metrics.}
For segmentation, we report F1 and structural overlap metrics: Correctness, Completeness, and Quality (CCQ) \cite{wiedemann1998empirical}, together with Pearson correlation between uncertainty and prediction error, expected calibration error (ECE) \cite{naeini2015obtaining}, and Brier score \cite{glenn1950verification}. 
Segmentation OOD detection uses ROC-AUC and PR-AUC. 
For classification, we report accuracy, ECE, Brier score, and OOD metrics AUC, AUPR, and FPR95 \cite{malinin2018predictive}. 
Classification directly yields one image-level uncertainty score, while for segmentation we spatially average the pixel-wise uncertainty map to obtain one score per image. OOD metrics are computed from these image-level scores. For segmentation and CIFAR-10 classification, we report mean $\pm$ standard deviation over three random seeds.

\begin{figure*}[t]
\center
\includegraphics[width=0.9\textwidth]{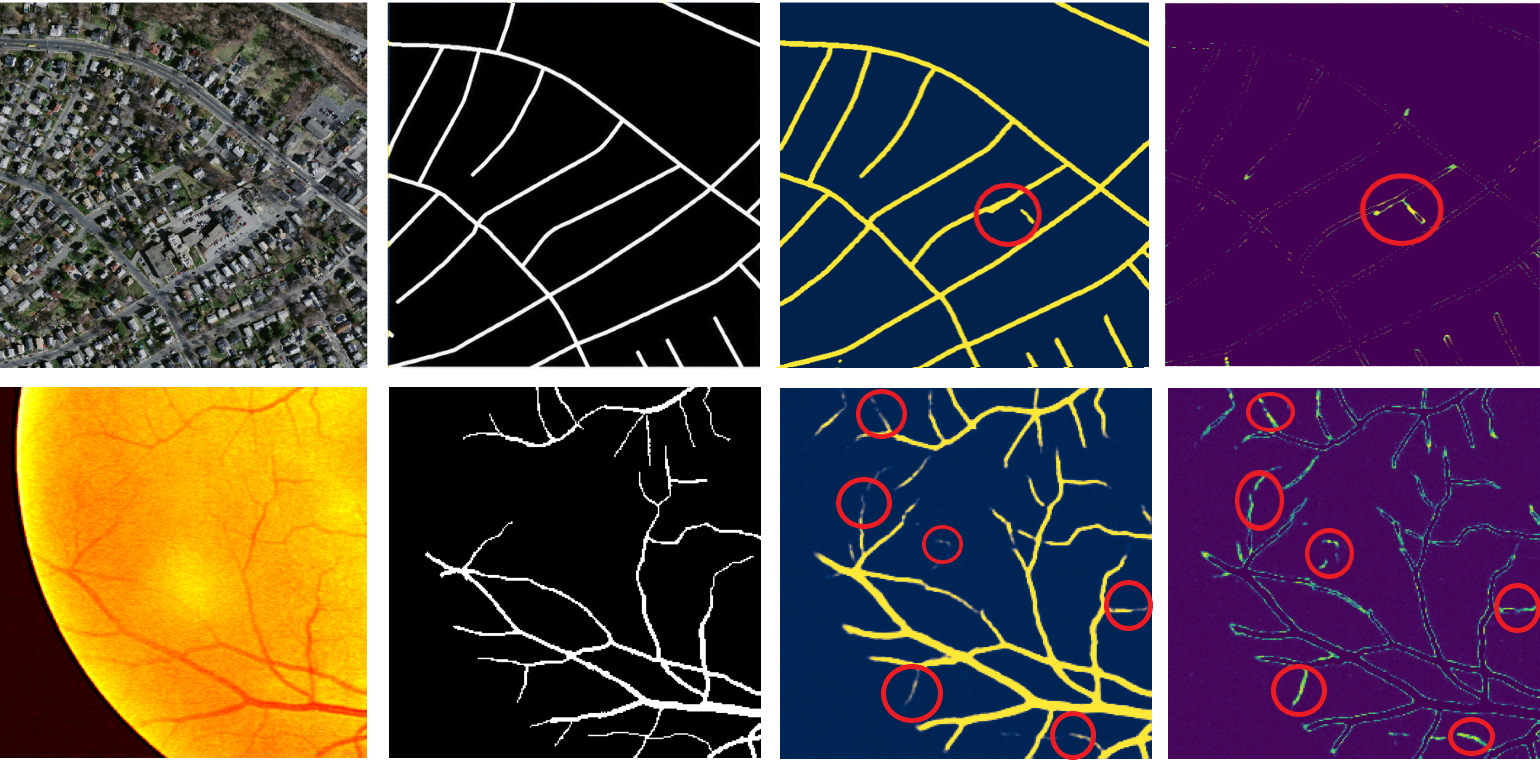}
\caption{
Visualization of our stochastic activation based uncertainty estimation.
From left to right: input image, ground‐truth mask, model prediction, and
the uncertainty map computed from the variance across $5$ stochastic
activation trajectories.
}
\label{fig:uncertainty_example}
\end{figure*}

\begin{table}[!t]
\begin{center}
\resizebox{\columnwidth}{!}{%
\begin{tabular}{|l|l|c|c|c|c|}
\hline
\textbf{Dataset} & \textbf{Method} & \textbf{Correct. (↑)} & \textbf{Complete. (↑)} & \textbf{Qual. (↑)} & \textbf{F1 (↑)} \\
\hline\hline

\multirow{9}{*}{Massachusetts}
& MC Dropout     & 91.00 & 87.20 & 81.90 & 78.62 \\
& DDU            & 91.44 & 86.10 & 80.52 & 79.23 \\
& TTA            & 91.46 & 86.73 & 81.19 & 79.43 \\
& Masksembles    & 81.63 & 87.67 & 81.94 & \underline{80.02} \\
& Deep Ensemble  & \textbf{92.37} & \underline{88.11} & \textbf{82.67} & \textbf{80.54} \\
& Laplace Redux  & 91.45 & 86.11 & 80.52 & 79.23 \\
& Iterative      & 90.07 & 84.97 & 78.39 & 78.58 \\
& \cellcolor{oursgreen}\textbf{Ours}
& \cellcolor{oursgreen}\underline{91.88}
& \cellcolor{oursgreen}\textbf{88.24}
& \cellcolor{oursgreen}\underline{82.11}
& \cellcolor{oursgreen}79.71 \\
& \cellcolor{oursgreen}
& \cellcolor{oursgreen}$\pm$0.43
& \cellcolor{oursgreen}$\pm$0.38
& \cellcolor{oursgreen}$\pm$0.65
& \cellcolor{oursgreen}$\pm$0.16 \\
\hline

\multirow{9}{*}{DRIVE}
& MC Dropout     & 96.18 & 85.13 & 82.53 & 79.41 \\
& DDU           & 96.13 & 85.83 & 82.86  & 80.81 \\
& TTA            & 95.45 & 84.41 & 82.29 & 79.98 \\
& Masksembles    & 96.82 & 85.11 & 82.69 & \underline{81.02} \\
& Deep Ensemble  & \underline{97.28} & \textbf{85.84} & \textbf{83.72} & \textbf{81.84} \\
& Laplace Redux  & 96.21 & 84.54 & 82.81 & 77.56 \\
& Iterative      & 96.86 & 82.61 & 81.87 & 80.30 \\
& \cellcolor{oursgreen}\textbf{Ours}
& \cellcolor{oursgreen}\textbf{98.38}
& \cellcolor{oursgreen}\underline{85.61}
& \cellcolor{oursgreen}\underline{82.91}
& \cellcolor{oursgreen}81.00 \\
& \cellcolor{oursgreen}
& \cellcolor{oursgreen}$\pm$0.84
& \cellcolor{oursgreen}$\pm$0.48
& \cellcolor{oursgreen}$\pm$0.53
& \cellcolor{oursgreen}$\pm$0.28 \\
\hline

\multirow{9}{*}{LiTS}
& MC Dropout     & 87.09 & 71.06 & 65.82 & 96.41 \\
& DDU            & 87.45 & 78.82 & 69.73 & 97.02 \\
& TTA            & 72.04 & 64.01 & 58.73 & 93.87 \\
& Masksembles    & 88.27 & 77.83 & 70.75 & 98.01 \\
& Deep Ensemble  & 87.59 & \textbf{81.38} & \textbf{72.84} & \textbf{98.25} \\
& Laplace Redux  & 87.27 & \underline{80.40} & 71.43 & 97.51 \\
& Iterative      & \underline{87.98} & 74.55 & 70.26 & 97.39 \\
& \cellcolor{oursgreen}\textbf{Ours}
& \cellcolor{oursgreen}\textbf{89.39}
& \cellcolor{oursgreen}79.57
& \cellcolor{oursgreen}\underline{72.93}
& \cellcolor{oursgreen}\underline{98.11} \\
& \cellcolor{oursgreen}
& \cellcolor{oursgreen}$\pm$0.44
& \cellcolor{oursgreen}$\pm$0.35
& \cellcolor{oursgreen}$\pm$0.59
& \cellcolor{oursgreen}$\pm$0.10 \\
\hline

\multirow{9}{*}{CREMI}
& MC Dropout     & \underline{97.40} & 94.45 & 92.68 & 78.58 \\
& DDU            & 96.41 & 94.76 & 93.16 & 79.48 \\
& TTA            & 97.04 & 94.65 & 92.63 & 76.23 \\
& Masksembles    & 96.90 & 95.01 & 93.56 & \underline{81.76} \\
& Deep Ensemble  & 97.24 & \underline{95.83} & \underline{93.93} & 79.92 \\
& Laplace Redux  & 97.26 & 95.62 & 93.76 & 79.58 \\
& Iterative      & 96.02 & 87.90 & 87.19 & 65.73 \\
& \cellcolor{oursgreen}\textbf{Ours}
& \cellcolor{oursgreen}\textbf{98.47}
& \cellcolor{oursgreen}\textbf{96.61}
& \cellcolor{oursgreen}\textbf{94.76}
& \cellcolor{oursgreen}\textbf{81.94} \\
& \cellcolor{oursgreen}
& \cellcolor{oursgreen}$\pm$0.55
& \cellcolor{oursgreen}$\pm$0.43
& \cellcolor{oursgreen}$\pm$0.71
& \cellcolor{oursgreen}$\pm$0.11 \\
\hline

\end{tabular}%
}
\end{center}
\caption{Segmentation performance across aerial and medical benchmarks.}
\label{tab:seg_accuracy}
\end{table}

\subsection{Results}
\label{sec:segmentation_experiments}

\textbf{Segmentation} Tables~\ref{tab:seg_accuracy} and~\ref{tab:seg_uncertainty} show that stochastic activation switching achieves best or second-best results in most task and uncertainty metrics while retaining the original architecture and single-model training setup. Table~\ref{tab:seg_accuracy} shows strong segmentation performance on both thin structures such as roads and vessels and more complex biomedical settings.

This indicates that activation-level stochasticity can support uncertainty estimation while retaining strong spatial representations for dense prediction. Table~\ref{tab:seg_uncertainty} shows that the resulting uncertainty aligns well with prediction errors, with strong calibration and Brier scores.
Together with Table~\ref{tab:seg_accuracy}, these results show improved uncertainty quality without sacrificing segmentation performance or introducing task-specific uncertainty modules.

For OOD evaluation, models trained on Massachusetts, DRIVE, and CREMI are tested on RoadTracer, CHASE-DB1, and ISBI 2012, respectively.
Table~\ref{tab:seg_ood} shows that disagreement between stochastic activation trajectories remains informative under distribution shift.

\begin{table}[!t]
\begin{center}
\resizebox{\columnwidth}{!}{%
\begin{tabular}{|l|l|c|c|c|c|c|}
\hline
\textbf{Dataset} & \textbf{Method} & \textbf{Corr. (↑)} & \textbf{ECE (↓)} & \textbf{Brier (↓)} & \textbf{Train} & \textbf{Infer} \\
\hline\hline

\multirow{9}{*}{Massachusetts}
& MC Dropout     & 79.73 & 0.026 & 0.033 & $\times1$ & $\times5$ \\
& DDU            & 75.64 & 0.027 & 0.031 & $\times1$ & -- \\
& TTA            & 79.90 & 0.025 & 0.029 & $\times1$ & $\times5$ \\
& Masksembles    & 79.62 & \underline{0.024} & 0.031 & $\times1$ & $\times5$ \\
& Deep Ensemble  & \underline{80.03} & \underline{0.024} & \underline{0.027} & $\times5$ & $\times5$ \\
& Laplace Redux  & 77.51 & 0.034 & 0.034 & $\times1$ & -- \\
& Iterative      & 77.28 & 0.028 & 0.030 & $\times3$ & $\times3$ \\
& \cellcolor{oursgreen}\textbf{Ours}
& \cellcolor{oursgreen}\textbf{80.38}
& \cellcolor{oursgreen}\textbf{0.020}
& \cellcolor{oursgreen}\textbf{0.025}
& \cellcolor{oursgreen}$\times1$
& \cellcolor{oursgreen}$\times5$ \\
& \cellcolor{oursgreen}
& \cellcolor{oursgreen}$\pm$1.08
& \cellcolor{oursgreen}$\pm$0.002
& \cellcolor{oursgreen}$\pm$0.006
& \cellcolor{oursgreen}
& \cellcolor{oursgreen} \\
\hline

\multirow{9}{*}{DRIVE}
& MC Dropout     & 80.58 & 0.032 & 0.040 & $\times1$ & $\times5$ \\
& DDU            & 80.45 & 0.031 & 0.034 & $\times1$ & -- \\
& TTA            & \textbf{83.00} & 0.058 & \underline{0.031} & $\times1$ & $\times5$ \\
& Masksembles    & 81.06 & 0.029 & 0.035 & $\times1$ & $\times5$ \\
& Deep Ensemble  & 81.72 & \underline{0.017} & \underline{0.031} & $\times5$ & $\times5$ \\
& Laplace Redux  & 79.56 & 0.086 & 0.043 & $\times1$ & -- \\
& Iterative      & 79.91 & 0.024 & 0.035 & $\times3$ & $\times3$ \\
& \cellcolor{oursgreen}\textbf{Ours}
& \cellcolor{oursgreen}\underline{81.80}
& \cellcolor{oursgreen}\textbf{0.015}
& \cellcolor{oursgreen}\textbf{0.029}
& \cellcolor{oursgreen}$\times1$
& \cellcolor{oursgreen}$\times5$ \\
& \cellcolor{oursgreen}
& \cellcolor{oursgreen}$\pm$1.43
& \cellcolor{oursgreen}$\pm$0.001
& \cellcolor{oursgreen}$\pm$0.007
& \cellcolor{oursgreen}
& \cellcolor{oursgreen} \\
\hline

\multirow{9}{*}{LiTS}
& MC Dropout     & 52.28 & 0.004 & 0.003 & $\times1$ & $\times5$ \\
& DDU            & 52.55 & 0.004 & 0.003 & $\times1$ & -- \\
& TTA            & 52.41 & 0.004 & 0.003 & $\times1$ & $\times5$ \\
& Masksembles    & 52.80 & \textbf{0.001} & 0.003 & $\times1$ & $\times5$ \\
& Deep Ensemble  & 53.30 & \textbf{0.001} & \textbf{0.001} & $\times5$ & $\times5$ \\
& Laplace Redux  & 52.35 & 0.003 & 0.003 & $\times1$ & -- \\
& Iterative      & \underline{53.34} & \underline{0.002} & \underline{0.002} & $\times3$ & $\times3$ \\
& \cellcolor{oursgreen}\textbf{Ours}
& \cellcolor{oursgreen}\textbf{54.22}
& \cellcolor{oursgreen}\textbf{0.001}
& \cellcolor{oursgreen}\underline{0.002}
& \cellcolor{oursgreen}$\times1$
& \cellcolor{oursgreen}$\times5$ \\
& \cellcolor{oursgreen}
& \cellcolor{oursgreen}$\pm$0.64
& \cellcolor{oursgreen}$\pm$0.000
& \cellcolor{oursgreen}$\pm$0.000
& \cellcolor{oursgreen}
& \cellcolor{oursgreen} \\
\hline

\multirow{9}{*}{CREMI}
& MC Dropout     & 37.23 & 0.035 & 0.017 & $\times1$ & $\times5$ \\
& DDU            & 34.12 & 0.016 & 0.018 & $\times1$ & -- \\
& TTA            & 38.70 & 0.011 & 0.016 & $\times1$ & $\times5$ \\
& Masksembles    & 36.97 & 0.014 & 0.017 & $\times1$ & $\times5$ \\
& Deep Ensemble  & 38.49 & \underline{0.009} & \underline{0.014} & $\times5$ & $\times5$ \\
& Laplace Redux  & 35.47 & 0.011 & 0.015 & $\times1$ & -- \\
& Iterative      & \underline{39.48} & 0.050 & 0.023 & $\times3$ & $\times3$ \\
& \cellcolor{oursgreen}\textbf{Ours}
& \cellcolor{oursgreen}\textbf{42.58}
& \cellcolor{oursgreen}\textbf{0.004}
& \cellcolor{oursgreen}\textbf{0.009}
& \cellcolor{oursgreen}$\times1$
& \cellcolor{oursgreen}$\times5$ \\
& \cellcolor{oursgreen}
& \cellcolor{oursgreen}$\pm$1.80
& \cellcolor{oursgreen}$\pm$0.002
& \cellcolor{oursgreen}$\pm$0.004
& \cellcolor{oursgreen}
& \cellcolor{oursgreen} \\
\hline

\end{tabular}%
}
\end{center}
\caption{Segmentation uncertainty, calibration, and relative computational cost.}
\label{tab:seg_uncertainty}
\end{table}

\begin{table}[!t]
\begin{center}
\resizebox{\columnwidth}{!}{%
\begin{tabular}{|l|c|c|c|c|c|c|}
\hline
\multirow{2}{*}{\textbf{Method}} 
& \multicolumn{2}{c|}{\textbf{Mass. $\rightarrow$ RoadTracer}} 
& \multicolumn{2}{c|}{\textbf{DRIVE $\rightarrow$ CHASE}} 
& \multicolumn{2}{c|}{\textbf{CREMI $\rightarrow$ ISBI}} \\
\cline{2-7}
& \textbf{AUROC (↑)} & \textbf{AUPR (↑)}
& \textbf{AUROC (↑)} & \textbf{AUPR (↑)}
& \textbf{AUROC (↑)} & \textbf{AUPR (↑)} \\
\hline\hline

MC Dropout
& 53.07 & 48.51
& 82.21 & 53.07
& 87.34 & 94.64 \\

DDU
& 54.63 & 52.54
& 85.99 & 61.77
& 88.79 & 94.82 \\

TTA
& 55.05 & \underline{56.62}
& \underline{89.94} & \textbf{67.31}
& \underline{93.10} & \underline{94.88} \\

Masksembles
& 54.00 & 51.91
& 86.35 & 64.51
& 87.72 & 94.14 \\

Deep Ensemble
& \textbf{57.32} & \textbf{57.85}
& 89.56 & \underline{66.33}
& 85.73 & 94.69 \\

Laplace Redux
& 52.82 & 52.04
& 86.46 & 57.90
& 88.68 & 94.77 \\

Iterative
& \underline{57.04} & 49.68
& 83.23 & 57.32
& 88.46 & 94.53 \\

\cellcolor{oursgreen}\textbf{Ours}
& \cellcolor{oursgreen}56.74
& \cellcolor{oursgreen}56.00
& \cellcolor{oursgreen}\textbf{90.52}
& \cellcolor{oursgreen}66.31
& \cellcolor{oursgreen}\textbf{95.43}
& \cellcolor{oursgreen}\textbf{95.68} \\

\cellcolor{oursgreen}
& \cellcolor{oursgreen}$\pm$0.71
& \cellcolor{oursgreen}$\pm$1.36
& \cellcolor{oursgreen}$\pm$0.47
& \cellcolor{oursgreen}$\pm$0.19
& \cellcolor{oursgreen}$\pm$0.24
& \cellcolor{oursgreen}$\pm$0.30 \\
\hline

\end{tabular}%
}
\end{center}
\caption{Cross-dataset segmentation OOD detection.}
\label{tab:seg_ood}
\end{table}

\begin{table}[!h]
\begin{center}
\resizebox{\columnwidth}{!}{%
\begin{tabular}{|l|l|c|c|c|c|c|c|}
\hline
\textbf{Dataset} & \textbf{Method} 
& \textbf{Acc (↑)} & \textbf{ECE (↓)} & \textbf{Brier (↓)}
& \textbf{AUC (↑)} & \textbf{AUPR (↑)} & \textbf{FPR95 (↓)} \\
\hline\hline

\multirow{8}{*}{CIFAR-10}
& MC Dropout     & 93.54 & \underline{0.011} & 0.094 & 84.22 & 87.36 & 27.10 \\
& DDU            & 94.12 & 0.018 & 0.090 & 85.20 & 89.26 & 24.11 \\
& TTA            & 94.23 & \underline{0.011} & 0.083 & \underline{93.64} & \underline{97.29} & 17.97 \\
& Masksembles    & \underline{94.61} & \underline{0.011} & \underline{0.080} & 91.69 & 92.88 & 18.47 \\
& Deep Ensemble  & \textbf{94.94} & \textbf{0.009} & \textbf{0.074} & 91.92 & 93.15 & \textbf{14.70} \\
& Laplace Redux  & 93.83 & 0.090 & 0.103 & 92.60 & 94.98 & 18.83 \\
& \cellcolor{oursgreen}\textbf{Ours}
& \cellcolor{oursgreen}94.59
& \cellcolor{oursgreen}\textbf{0.009}
& \cellcolor{oursgreen}0.083
& \cellcolor{oursgreen}\textbf{95.14}
& \cellcolor{oursgreen}\textbf{97.54}
& \cellcolor{oursgreen}\underline{16.36} \\
& \cellcolor{oursgreen}
& \cellcolor{oursgreen}$\pm$0.31
& \cellcolor{oursgreen}$\pm$0.001
& \cellcolor{oursgreen}$\pm$0.005
& \cellcolor{oursgreen}$\pm$0.22
& \cellcolor{oursgreen}$\pm$0.35
& \cellcolor{oursgreen}$\pm$0.29 \\
\hline

\multirow{7}{*}{MiniImageNet}
& MC Dropout     & 64.70 & 0.067 & 0.212 & 81.57 & 86.96 & 53.94 \\
& DDU            & 65.83 & 0.050 & 0.191 & \underline{84.21} & 88.49 & 49.85 \\
& TTA            & \underline{68.52} & 0.044 & \textbf{0.171} & 79.10 & 84.41 & 54.28 \\
& Masksembles    & 66.24 & 0.048 & 0.194 & 83.52 & 88.06 & 48.35 \\
& Deep Ensemble  & \textbf{70.50} & \underline{0.039} & 0.187 & 84.13 & \underline{89.20} & \underline{47.98} \\
& Laplace Redux  & 64.48 & 0.052 & 0.215 & 80.39 & 86.14 & 54.55 \\
& \cellcolor{oursgreen}\textbf{Ours}
& \cellcolor{oursgreen}66.90
& \cellcolor{oursgreen}\textbf{0.031}
& \cellcolor{oursgreen}\underline{0.178}
& \cellcolor{oursgreen}\textbf{86.42}
& \cellcolor{oursgreen}\textbf{90.66}
& \cellcolor{oursgreen}\textbf{45.24} \\
\hline

\multirow{7}{*}{BloodMNIST}
& MC Dropout     & 98.42 & \textbf{0.008} & 0.027 & 93.81 & 96.12 & 22.36 \\
& DDU            & 98.52 & \underline{0.009} & \underline{0.023} & 98.42 & 98.86 & 9.38 \\
& TTA            & \underline{98.60} & 0.010 & 0.030 & 98.43 & 99.09 & 6.40 \\
& Masksembles    & 98.40 & 0.010 & 0.025 & 98.74 & 99.11 & 7.61 \\
& Deep Ensemble  & 98.45 & 0.010 & \textbf{0.020} & \textbf{99.06} & \underline{99.50} & \underline{5.11} \\
& Laplace Redux  & 80.35 & 0.011 & 0.029 & 94.23 & 97.16 & 7.61 \\
& \cellcolor{oursgreen}\textbf{Ours}
& \cellcolor{oursgreen}\textbf{98.72}
& \cellcolor{oursgreen}\textbf{0.008}
& \cellcolor{oursgreen}\underline{0.023}
& \cellcolor{oursgreen}\underline{98.98}
& \cellcolor{oursgreen}\textbf{99.53}
& \cellcolor{oursgreen}\textbf{4.50} \\
\hline

\multirow{7}{*}{PathMNIST}
& MC Dropout     & 94.14 & \underline{0.034} & 0.096 & \underline{89.43} & \underline{68.70} & 25.15 \\
& DDU            & 94.53 & 0.041 & 0.094 & 88.39 & 55.97 & 24.52 \\
& TTA            & 94.00 & 0.041 & 0.098 & 85.72 & 56.60 & \underline{20.98} \\
& Masksembles    & 94.74 & 0.037 & 0.098 & 85.36 & 57.25 & 23.38 \\
& Deep Ensemble  & \underline{95.01} & \textbf{0.018} & \textbf{0.076} & 86.04 & 58.20 & 22.09 \\
& Laplace Redux  & 93.77 & 0.045 & 0.111 & 44.68 & 27.51 & 92.09 \\
& \cellcolor{oursgreen}\textbf{Ours}
& \cellcolor{oursgreen}\textbf{95.40}
& \cellcolor{oursgreen}0.037
& \cellcolor{oursgreen}\underline{0.088}
& \cellcolor{oursgreen}\textbf{94.87}
& \cellcolor{oursgreen}\textbf{80.58}
& \cellcolor{oursgreen}\textbf{11.60} \\
\hline

\end{tabular}%
}
\end{center}
\caption{Classification accuracy and OOD detection on natural and medical benchmarks.}
\label{tab:classification_results}
\end{table}
\textbf{Classification.}
We evaluate classification uncertainty on natural and medical benchmarks. 
Table~\ref{tab:classification_results} reports classification performance and OOD detection. The results show that stochastic activation switching improves uncertainty behavior while maintaining strong classification performance. 
On the natural-image benchmarks, the method achieves strong calibration and OOD detection; on MiniImageNet, however, its classification accuracy is lower than Deep Ensembles (66.90 vs. 70.50), despite better calibration and OOD detection metrics.
On high-resolution medical benchmarks, method transfers to Swin Transformer V2, maintaining high in-distribution accuracy while improving cross-domain OOD separation.

\textbf{Overall.}
Stochastic activations improves uncertainty quality while maintaining competitive task performance across most settings, with MiniImageNet the main exception in.
The method performs well across visual domains and architectures, including U-Net, ResNet-style classifiers, and Swin Transformer V2.
These results support the main claim that useful predictive diversity can be obtained from a single trained model without changing the input, adding an uncertainty head, redesigning the architecture, or training multiple models.

%% file: Sections/Ablation.tex
\section{Ablation Study}
\label{sec:ablation_analysis}

We analyze how the main design choices of stochastic activation switching affect model behavior. 
All ablations are conducted on CIFAR-10 with the same ResNet-style backbone and training protocol. 
We study four aspects: stochasticity granularity, the activation pair, information preservation, and inference-pass count.
These analyses explain not only whether the method works, but also why local activation-level stochasticity provides predictive diversity without substantially disrupting the learned representation.

\textbf{Granularity of Stochasticity.}
We compare layer-level, channel-level, and node-level stochastic activation sampling. 
Layer-level sampling shares one activation choice across an entire layer, channel-level sampling shares one choice within each feature channel, and node-level sampling independently samples the activation at each unit. 
As shown in Table~\ref{tab:ablation_granularity}, although layer-level stochasticity gives the lowest ECE, node-level sampling provides the strongest overall trade-off across predictive performance and uncertainty quality, improving accuracy, NLL \cite{guo2017calibration}, and Brier score.
This indicates that injecting stochasticity at the most local scale creates richer functional diversity inside the network. 
This micro-level variation improves predictive uncertainty estimation while reducing local activation co-adaptation.

\begin{table}[h]
\begin{center}
\begin{tabular}{|l|c|c|c|c|}
\hline
\textbf{Granularity} & \textbf{Acc. (↑)} & \textbf{NLL (↓)} & \textbf{ECE (↓)} & \textbf{Brier (↓)} \\
\hline\hline
Layer-level   & 93.04 & 0.204 & \textbf{0.005} & 0.101 \\
Channel-level & \underline{93.66} & 0.200 & 0.009 & 0.094 \\
Node-level    & \textbf{94.68} & \textbf{0.162} & 0.009 & \textbf{0.088} \\
\hline
\end{tabular}
\end{center}
\caption{Ablation on stochastic activation granularity.}
\label{tab:ablation_granularity}
\end{table}

\textbf{Choice of Stochastic Activation.}
We pair the backbone's native activation with one alternative and ablate this choice in Table~\ref{tab:ablation_activation}.
ReLU+SiLU gives the best accuracy and NLL, whereas ReLU+Tanh gives the best ECE and Brier, indicating that smoother alternatives favor task performance while stronger nonlinear changes improve calibration.
Using more than two candidate activations was less stable, likely due to excessive feature-distribution variation, so we use two-activation switching.
We do not tune activations per dataset: SiLU is fixed for classification and Tanh for segmentation across datasets, modalities, and architectures.
Other pairs remain competitive, suggesting that per-dataset tuning is unnecessary.

\begin{table}[!h]
\begin{center}
\begin{tabular}{|l|c|c|c|c|}
\hline
\textbf{Activation Pair} & \textbf{Acc. (↑)} & \textbf{NLL (↓)} & \textbf{ECE (↓)} & \textbf{Brier (↓)} \\
\hline\hline
ReLU + Tanh~\cite{lecun2002efficient} & \underline{94.43} & \underline{0.176} & \textbf{0.008} & \textbf{0.085} \\
ReLU + SiLU~\cite{elfwing2018sigmoid} & \textbf{94.68} & \textbf{0.162} & \underline{0.009} & \underline{0.088} \\
ReLU + GELU~\cite{hendrycks2016gaussian} & 94.12 & 0.184 & 0.012 & 0.095 \\
ReLU + Mish~\cite{misra2019mish} & 93.98 & 0.192 & 0.014 & 0.101 \\
ReLU + ELU~\cite{clevert2015fast}  & 93.67 & 0.206 & 0.018 & 0.109 \\
\hline
\end{tabular}
\end{center}
\caption{Effect of the activation pair used for stochastic switching.}
\label{tab:ablation_activation}
\end{table}

\textbf{Information Destruction Analysis.} 
Unlike masking-based uncertainty methods, stochastic activation switching perturbs the nonlinear transformation while keeping feature pathways active. 
As shown in Fig.~\ref{fig:idr_comparison}, our method gives lower IDR than MC Dropout across datasets, indicating that stochastic activation switching preserves intermediate representations under stochastic inference. 
This supports our motivation that predictive diversity can be induced through activation transformations without removing feature information.

\begin{figure}[h]
\begin{center}
\includegraphics[width=0.70\columnwidth]{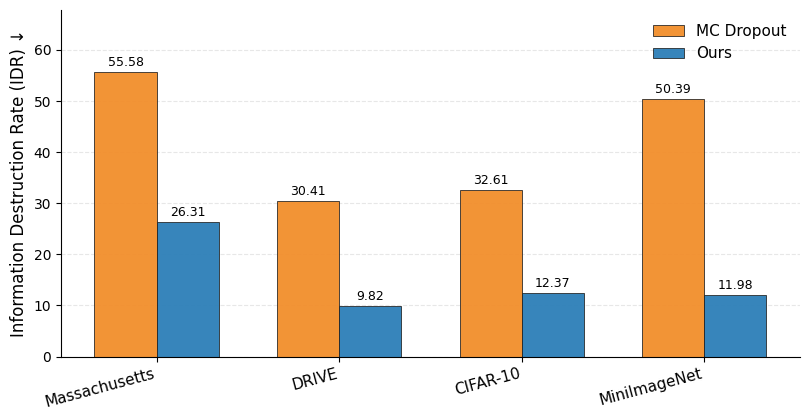}
\end{center}
\caption{
IDR comparison across datasets. Lower is better.
}
\label{fig:idr_comparison}
\end{figure}

\textbf{Number of inference passes.}
Finally, we study the effect of the number of stochastic forward passes used at inference time. 
Increasing \(T\) makes the predictive mean and uncertainty estimate more stable, but also increases inference time approximately linearly. 
Table~\ref{tab:ablation_passes} shows that uncertainty metrics improve as more passes are used, while accuracy saturates after a small number of samples. 
We therefore use \(T=5\) in the main experiments, as it provides a practical balance between uncertainty quality and computational cost.

\begin{table}[h]
\begin{center}
\renewcommand{\arraystretch}{0.85}
\begin{tabular}{|c|c|c|c|c|c|}
\hline
\textbf{Passes \(T\)} 
& \textbf{Accuracy (↑)} 
& \textbf{NLL (↓)} 
& \textbf{ECE (↓)} 
& \textbf{Brier (↓)} 
& \textbf{Time (s)} \\
\hline\hline
1  & 93.81 & 0.223 & 0.026 & 0.096 & 3.11 \\
3  & 94.26 & 0.186 & 0.013 & 0.085 & 7.06 \\
5  & 94.68 & 0.162 & 0.009 & 0.088 & 11.06 \\
10 & 94.37 & 0.176 & 0.007 & 0.082 & 20.99 \\
\hline
\end{tabular}%
\end{center}
\caption{Effect of the number of inference passes on acc, uncertainty quality, and runtime.}
\label{tab:ablation_passes}
\end{table}

%% file: Sections/Conclusion.tex
\section{Conclusion}
\label{sec:conclusion}

We reframe stochastic activation switching as an uncertainty estimation mechanism. Instead of relying on feature masking, input perturbations, or explicit ensembles, the method induces predictive diversity by sampling between nonlinear activation functions while keeping intermediate feature pathways active. Across classification, segmentation, and out-of-distribution detection experiments, the proposed approach improves uncertainty quality without degrading the main task performance, and remains effective across different visual domains and architectures. The results show that stochastic nonlinearities can provide a lightweight and information-preserving source of predictive variation from a single trained model, without requiring separate uncertainty heads or task-specific architectural redesign. A limitation is that the activation pair influences the trade-off between predictive performance and calibration, while repeated stochastic passes introduce additional inference cost. Future work can explore adaptive activation selection and more efficient sampling strategies to reduce inference overhead while preserving uncertainty estimation quality.


\newpage

%% file: Sections/supp.tex
\definecolor{oursgreen}{RGB}{220,245,220}
\definecolor{bmvcblue}{RGB}{0,45,120}
\definecolor{BMVCBLUE}{RGB}{0,45,120}

\runninghead{\textcolor{bmvcblue}{Sarhangzadeh \etal}}
{\textcolor{bmvcblue}{Supplementary Material}}

\setcounter{section}{0}
\renewcommand{\thesection}{\Alph{section}}
\renewcommand{\thesubsection}{\thesection.\arabic{subsection}}
\renewcommand{\theHsection}{supp.\arabic{section}}

\makeatletter
\renewcommand{\section}{%
  \@startsection{section}{1}{\z@}%
  {2.0ex plus 0.5ex minus 0.2ex}%
  {1.0ex plus 0.2ex}%
  {\normalfont\large\bfseries\color{bmvcblue}}%
}

\renewcommand{\subsection}{%
  \@startsection{subsection}{2}{\z@}%
  {1.5ex plus 0.4ex minus 0.2ex}%
  {0.8ex plus 0.2ex}%
  {\normalfont\normalsize\bfseries\color{bmvcblue}}%
}
\makeatother

\newtheorem{lemma}{Lemma}[section]
\newtheorem{theorem}{Theorem}[section]
\newtheorem{corollary}{Corollary}[section]
\newtheorem{remark}{Remark}[section]


\begin{center}
{\LARGE\bfseries\color{bmvcblue} Supplementary Material}
\end{center}

\section{Approach Overview}
\label{sec:supp_overview}

Stochastic Activation Switching replaces each deterministic activation in the backbone during training and inference with a per-unit Bernoulli switch between the backbone's native activation \(\sigma_{\mathrm{orig}}\) and a fixed alternative \(\sigma_{\mathrm{alt}}\).
During training, a fresh activation configuration is sampled at every forward pass, and the network is optimized with the standard task loss, so that the learned weights become robust to activation-level variation rather than to a single fixed nonlinearity.
At inference time, the trained weights are kept fixed and \(T\) stochastic forward passes are drawn from the same input; the empirical mean of the resulting predictions is used as the point estimate, and the empirical variance across passes is used as the uncertainty estimate.
Because every activation pathway remains active in all passes, stochasticity changes only the shape of the nonlinearity applied at each unit, without removing features from the intermediate representation.
This perturbation-without-removal property is the qualitative distinction from dropout-style mechanisms \cite{gal2016dropout} and the basis for the uncertainty estimator analyzed in Section~\ref{sec:supp_math}.

\section{Implementation Details}
\label{sec:supp_implementation}

\paragraph{Activation pair.}
For classification, we retain the backbone's native activation as
\(\sigma_{\mathrm{orig}}\) and use \(\sigma_{\mathrm{alt}}=\mathrm{SiLU}\)~\cite{elfwing2018sigmoid}.
For segmentation, we likewise retain the native activation and use
\(\sigma_{\mathrm{alt}}=\tanh\).
These fixed task-level choices are used across datasets and architectures, following the activation-choice ablation in Table~6 of the main paper.

\paragraph{Granularity and sampling rate.}
We use node-level granularity with an independent mask per activation unit and \(p=0.5\); both choices follow the granularity ablation in Table~5 of the main paper.
The choice \(p=0.5\) maximizes the entropy of the Bernoulli switch, giving both activations equal probability and avoiding a systematic preference for either nonlinearity.
Channel- and layer-level variants share a single mask across a channel or an entire layer and are not used in the main results.

\paragraph{Training.}
Training follows the standard supervised pipeline of the corresponding deterministic baseline for each dataset.
The optimizer, learning rate schedule, batch size, augmentation, and number of epochs are kept identical to those of the deterministic backbone.
The only modification is that each activation block performs Bernoulli switching at every forward pass during both training and inference.

\paragraph{Inference.}
At inference time, the trained weights are kept fixed and \(T=5\) stochastic forward passes are drawn per input, following the inference-pass ablation in Table~7 of the main paper.
The empirical mean is used as the prediction, and the empirical variance across the \(T\) passes is used as the uncertainty estimate, as defined in Equations~(7)--(8) of the main paper.

\paragraph{Backbones.}
For segmentation, we use a U-Net architecture~\cite{ronneberger2015unetconvolutionalnetworksbiomedical} on Massachusetts~\cite{MnihThesis}, DRIVE~\cite{staal2004ridge}, and LiTS~\cite{bilic2023liver}, and Swin Transformer V2~\cite{liu2022swin} on CREMI~\cite{cardona2010drosophila}.
For natural-image classification, we use a ResNet-style~\cite{he2016deep} convolutional backbone on CIFAR-10~\cite{krizhevsky2009learning} and MiniImageNet~\cite{vinyals2016matching}.
For high-resolution medical classification, we use Swin Transformer V2 on BloodMNIST and PathMNIST~\cite{yang2023medmnist}.
In all cases, the backbone matches the corresponding deterministic baseline, and only the activation block is replaced by the stochastic switch.

\paragraph{Datasets and splits.}
For segmentation, Massachusetts Roads is used with the official train-validation-test split, and RoadTracer~\cite{bastani2018roadtracerautomaticextractionroad} serves as the held-out cross-dataset OOD evaluation set.
DRIVE is used with the standard train/test split, and CHASE-DB1~\cite{fraz2012ensemble} is used for cross-dataset OOD evaluation.
LiTS axial slices are used as 2D inputs.
CREMI is used with its published volume split, and ISBI~2012~\cite{funke2018large} is used for cross-dataset OOD evaluation.
For classification, CIFAR-10 uses the canonical 50{,}000/10{,}000 train/test split, with SVHN~\cite{netzer2011reading} serving as the OOD set.
MiniImageNet is split following the standard 600 images per class across 100 classes, with CIFAR-100 used as the OOD set.
BloodMNIST and PathMNIST follow the MedMNIST~v2 official splits at \(224\times224\) resolution and are treated as a cross-domain OOD pair for each other.

\paragraph{Augmentation.}
For segmentation training, inputs are augmented with random crops, random horizontal and vertical flips, and \(90^\circ\) rotations; no augmentation is applied at test time.
For natural-image classification, we apply random crops with 4-pixel padding and random horizontal flips.
For high-resolution medical classification, we apply random resized crops and horizontal flips.
All inputs are channel-normalized using dataset-specific means and standard deviations.

\paragraph{Training hyperparameters.}
Table~\ref{tab:supp_hyperparams} summarizes the optimizer, learning rate, batch size, and number of epochs used for each backbone.
All models are trained from scratch with the standard supervised cross-entropy loss for classification and a combined soft-Dice plus binary cross-entropy loss for segmentation.
The optimizer and schedule are identical for the deterministic baseline and for Stochastic Activation Switching; only the activation block differs.

\begin{table}[h]
\centering
\small
\begin{tabular}{lllrlr}
\hline
Backbone & Dataset & Optimizer & Batch & LR / schedule & Epochs \\
\hline
U-Net      & Massachusetts & Adam \cite{diederik2014adam}              & 32  & \(1{\times}10^{-3}\), cosine    & 200 \\
U-Net      & DRIVE         & Adam              & 8   & \(1{\times}10^{-3}\), cosine    & 200 \\
U-Net      & LiTS          & Adam              & 64   & \(1{\times}10^{-4}\), cosine    & 200 \\
Swin-V2    & CREMI         & AdamW \cite{loshchilov2017decoupled}             & 32   & \(1{\times}10^{-4}\) & 200 \\
ResNet-18  & CIFAR-10      & SGD (\(m{=}0.9\)) & 256 & \(0.1\), cosine                 & 200 \\
ResNet-50  & MiniImageNet  & SGD (\(m{=}0.9\)) & 128 & \(0.1\), cosine                 & 200 \\
Swin-V2    & BloodMNIST    & AdamW             & 128  & \(1{\times}10^{-4}\) & 200 \\
Swin-V2    & PathMNIST     & AdamW             & 256  & \(1{\times}10^{-4}\) & 200 \\
\hline
\end{tabular}
\caption{Per-dataset training hyperparameters used for all methods.
Weight decay is \(5{\times}10^{-4}\) for SGD and \(1{\times}10^{-4}\) for AdamW.
}
\label{tab:supp_hyperparams}
\end{table}

\paragraph{Baseline configurations.}
MC Dropout~\cite{gal2016dropout} uses a dropout rate of \(p_d=0.2\) inserted after each activation block and kept active at inference with \(T=5\) stochastic forward passes.
DDU~\cite{mukhoti2023deep} applies post-hoc density estimation to features from the trained deterministic backbone.
Masksembles~\cite{durasov2021masksembles} uses five fixed partially overlapping masks, producing five predictions at inference.
Deep 
Deep Ensembles~\cite{lakshminarayanan2017simple} are formed from five independently trained models with different random seeds; the prediction is the per-class probability average over members and the uncertainty is the empirical variance across members.
Laplace Redux~\cite{daxberger2021laplace} applies a last-layer Laplace approximation to the trained backbone with a diagonal covariance and prior precision tuned on the validation split.
Test-Time Augmentation~\cite{wang2019aleatoric} samples five augmented versions of each test input (random crops and horizontal flips for classification; the segmentation augmentation pipeline above for dense prediction) and averages the predictions.
The iterative segmentation baseline~\cite{durasov2024enabling} uses three refinement iterations following the configuration of its released reference implementation.

\paragraph{Compute.}
All experiments were run with PyTorch on a single NVIDIA H100 GPU.
Inference with \(T=5\) stochastic forward passes increases the per-input inference cost by approximately a factor of five over the deterministic backbone, as reported in Table~7 of the main paper.
All results in this work are reported from a single random seed per method, except for the five-member Deep Ensemble baseline.

\section{Mathematical Analysis of Stochastic Activation Switching}
\label{sec:supp_math}

We provide a mathematical characterization of stochastic activation switching. 
We show that the method defines a well-posed stochastic predictor, derive closed-form conditional moments for a single stochastic activation unit, bound the induced per-unit variance under a bounded activation gap, and analyze the convergence of the empirical Monte Carlo estimators used at inference time.

\subsection{Notation and Setup}
\label{sec:supp_notation}

Fix a neural network \(f_\theta : \mathcal{X} \to \mathcal{Y}\) with parameters \(\theta\), whose nonlinear blocks contain activation units indexed by layer \(l\) and unit \(j\). 
Let \(z_j^{(l,t)}\) denote the pre-activation of unit \(j\) in layer \(l\) during stochastic forward pass \(t\). 
Let \(\sigma_{\mathrm{orig}},\sigma_{\mathrm{alt}}:\mathbb{R}\to\mathbb{R}\) be the original and alternative scalar activation functions. 
At pass \(t\in\{1,\ldots,T\}\) and unit \((l,j)\), draw an independent Bernoulli mask
\[
m_j^{(l,t)} \sim \mathrm{Bern}(p),
\qquad p\in(0,1),
\]
and define the stochastic activation output as
\begin{equation}
h^{(l,t)}_j
=
\bigl(1-m_j^{(l,t)}\bigr)
\sigma_{\mathrm{orig}}\!\bigl(z_j^{(l,t)}\bigr)
+
m_j^{(l,t)}
\sigma_{\mathrm{alt}}\!\bigl(z_j^{(l,t)}\bigr).
\label{eq:sas_unit}
\end{equation}
This is the same Bernoulli activation-switching rule used in the main paper. 
Unless stated otherwise, we analyze the node-level case, where each unit samples its own mask. 
For channel-level or layer-level sampling, the same mask is shared across a channel or across an entire layer.

In the following derivations, expectations are taken with the current pre-activation \(z_j^{(l,t)}\) fixed. 
This is important because, in deeper layers, \(z_j^{(l,t)}\) is itself affected by random activation choices made in previous layers. 
Thus, we first analyze the local randomness caused by the current mask, and full-network expectations can then be obtained by averaging over the upstream randomness as well. 
For readability, we omit layer and pass indices when the meaning is clear.

\subsection{The Stochastic Predictor and Empirical Estimators}
\label{sec:supp_predictor}

Let
\[
\sigma^{(t)} = \bigl(m_j^{(l,t)}\bigr)_{l,j}
\]
denote the collection of all activation-switching masks sampled during stochastic forward pass \(t\). 
With the trained weights \(\theta\) fixed, the output of the network for input \(x\) is
\begin{equation}
\eta^{(t)} = f_\theta\!\left(x;\sigma^{(t)}\right).
\label{eq:supp_stochastic_prediction}
\end{equation}
Thus, each stochastic pass produces a random prediction only through the sampled activation configuration. 
If the original and alternative activations differ for at least one unit that affects the output, then different mask samples can produce different predictions. 
If the two activations are identical, the stochastic predictor reduces to the deterministic network.

Over \(T\) stochastic forward passes, the prediction trajectory is
\[
\left\{\eta^{(1)},\eta^{(2)},\dots,\eta^{(T)}\right\}.
\]
Following the main paper, we estimate the predictive mean and uncertainty by
\begin{equation}
\hat{\mu}_T(x)
=
\frac{1}{T}\sum_{t=1}^{T}\eta^{(t)},
\qquad
\hat{U}_T(x)
=
\frac{1}{T}\sum_{t=1}^{T}
\left(\eta^{(t)}-\hat{\mu}_T(x)\right)^2.
\label{eq:supp_empirical_estimators}
\end{equation}
Here, \(\hat{\mu}_T(x)\) is the Monte Carlo estimate of the expected prediction, and \(\hat{U}_T(x)\) is the empirical variance across stochastic predictions. 
Their population counterparts are
\begin{equation}
\mu_\infty(x)
=
\mathbb{E}\!\left[\eta^{(t)}\right],
\qquad
V_\infty(x)
=
\mathrm{Var}\!\left[\eta^{(t)}\right],
\label{eq:supp_population_moments}
\end{equation}
which correspond to the ideal mean and variance under infinitely many stochastic activation samples.

\subsection{Per-Unit Moments and Bounded Activation Gap}
\label{sec:supp_unit_moments}

We first analyze the stochasticity introduced at a single activation unit. 
Conditioned on a fixed pre-activation value \(z\), the only source of randomness is the Bernoulli switching variable \(m\). 
Thus, the unit output is a two-point random variable: it takes the original activation value with probability \(1-p\), and the alternative activation value with probability \(p\).

\begin{lemma}[Unit-wise moments]
\label{lem:moments}
For any fixed \(z\in\mathbb{R}\), the stochastic activation output \(h\) in Equation~\eqref{eq:sas_unit} satisfies
\begin{equation}
\mathbb{E}[h\mid z]
=
(1-p)\sigma_{\mathrm{orig}}(z)
+
p\,\sigma_{\mathrm{alt}}(z),
\label{eq:supp_unit_mean}
\end{equation}
and
\begin{equation}
\mathrm{Var}[h\mid z]
=
p(1-p)
\left(
\sigma_{\mathrm{alt}}(z)
-
\sigma_{\mathrm{orig}}(z)
\right)^2.
\label{eq:supp_unit_variance}
\end{equation}
Therefore, the per-unit variance is zero whenever the two activation functions produce the same value at \(z\). 
In other words, stochastic activation switching creates local uncertainty only where the activation pair disagrees.
\end{lemma}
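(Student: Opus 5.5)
The plan is a direct computation that treats \(h\) as an affine function of the single Bernoulli variable \(m\). Conditioned on the fixed pre-activation \(z\), the two values \(a=\sigma_{\mathrm{orig}}(z)\) and \(b=\sigma_{\mathrm{alt}}(z)\) are deterministic real numbers. The only randomness in Equation~\eqref{eq:sas_unit} is therefore the mask \(m\sim\mathrm{Bern}(p)\). Since the mask at unit \((l,j)\) is drawn independently of everything upstream, conditioning on \(z\) does not alter its distribution. First, rewrite Equation~\eqref{eq:sas_unit} as
\[
h = a + m\,(b-a).
\]
This exhibits \(h\) as a constant plus a constant multiple of \(m\).

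For the mean, use linearity of conditional expectation together with \(\mathbb{E}[m\mid z]=p\). This gives \(\mathbb{E}[h\mid z]=a+p(b-a)=(1-p)a+pb\), which is Equation~\eqref{eq:supp_unit_mean}.

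For the variance, use the fact that additive constants do not change variance and that multiplicative constants factor out squared. This gives \(\mathrm{Var}[h\mid z]=(b-a)^2\,\mathrm{Var}[m]\). Since \(m^2=m\) for a Bernoulli variable, \(\mathrm{Var}[m]=p-p^2=p(1-p)\), which yields Equation~\eqref{eq:supp_unit_variance}. As a cross-check, one can also compute directly from the two-point law: \(\mathbb{E}[h^2\mid z]=(1-p)a^2+pb^2\). Subtracting the squared mean gives \(p(1-p)(a-b)^2\) after expanding.

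The concluding remark follows at once. Because \(p\in(0,1)\), we have \(p(1-p)>0\), so the variance vanishes exactly when \(\sigma_{\mathrm{alt}}(z)=\sigma_{\mathrm{orig}}(z)\). In that case \(h\) is almost surely constant.

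The only point requiring care, and hence the main potential obstacle, is justifying the conditioning. In deeper layers \(z\) is itself random through the upstream masks. We need \(m_j^{(l,t)}\) to be independent of \(z_j^{(l,t)}\), so that \(\mathbb{E}[m\mid z]=p\) and \(\mathrm{Var}[m\mid z]=p(1-p)\). This holds because \(z_j^{(l,t)}\) is a measurable function of the input and of masks from earlier layers only, and all masks are sampled independently.

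For channel- or layer-level sharing, a single mask is reused across units. The marginal per-unit statement is still valid, because each unit still sees one Bernoulli\((p)\) mask independent of its own pre-activation. Only cross-unit covariances, which the lemma does not address, would change.
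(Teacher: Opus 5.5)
Your proof is correct and essentially matches the paper's: the paper gets the mean from \(\mathbb{E}[m]=p\) and the variance from the second moment \(\mathbb{E}[h^2\mid z]=(1-p)\sigma_{\mathrm{orig}}^2(z)+p\,\sigma_{\mathrm{alt}}^2(z)\) (using \(m^2=m\)), which is exactly your cross-check, while your affine rewriting \(h=a+m(b-a)\) with \(\mathrm{Var}[m]=p(1-p)\) is an equivalent shortcut. Your explicit argument that \(m\) is independent of \(z\), because \(z\) depends only on the input and upstream masks, is a useful addition; the paper covers this only through its convention of holding the pre-activation fixed.
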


\begin{proof}
Since \(m\sim\mathrm{Bern}(p)\), we have \(\mathbb{E}[m]=p\). 
Taking expectation in Equation~\eqref{eq:sas_unit} directly gives Equation~\eqref{eq:supp_unit_mean}. 
For the second moment, because \(m\in\{0,1\}\), we have \(m^2=m\). 
Thus,
\[
\mathbb{E}[h^2\mid z]
=
(1-p)\sigma_{\mathrm{orig}}^2(z)
+
p\,\sigma_{\mathrm{alt}}^2(z).
\]
Subtracting \(\bigl(\mathbb{E}[h\mid z]\bigr)^2\) gives Equation~\eqref{eq:supp_unit_variance}.
\end{proof}

Lemma~\ref{lem:moments} shows that the amount of stochasticity introduced by one unit is controlled by two quantities: the switching probability \(p\), and the gap between the two activation functions at the current pre-activation value. 
If this activation gap is bounded, then the induced per-unit variance is also bounded.

\begin{theorem}[Bounded noise envelope]
\label{thm:bounded}
Let
\begin{equation}
\Delta
=
\sup_{z\in\mathbb{R}}
\left|
\sigma_{\mathrm{alt}}(z)
-
\sigma_{\mathrm{orig}}(z)
\right|.
\label{eq:supp_activation_gap}
\end{equation}
If \(\Delta<\infty\), then for every \(z\in\mathbb{R}\),
\begin{equation}
\mathrm{Var}[h\mid z]
\le
p(1-p)\Delta^2
\le
\frac{1}{4}\Delta^2.
\label{eq:supp_bounded_variance}
\end{equation}
The upper bound is tight at \(p=\frac{1}{2}\).
\end{theorem}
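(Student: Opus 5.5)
The plan is to derive Theorem~\ref{thm:bounded} directly from Lemma~\ref{lem:moments}, with no new probabilistic argument. For a fixed \(z\in\mathbb{R}\), Equation~\eqref{eq:supp_unit_variance} gives
\[
\mathrm{Var}[h\mid z]=p(1-p)\bigl(\sigma_{\mathrm{alt}}(z)-\sigma_{\mathrm{orig}}(z)\bigr)^2 .
\]
The first inequality then follows pointwise. By the definition of \(\Delta\) in Equation~\eqref{eq:supp_activation_gap}, we have \(|\sigma_{\mathrm{alt}}(z)-\sigma_{\mathrm{orig}}(z)|\le\Delta\) for every \(z\). Squaring preserves this inequality because both sides are nonnegative. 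Since \(p(1-p)\ge 0\) for \(p\in(0,1)\), we obtain \(\mathrm{Var}[h\mid z]\le p(1-p)\Delta^2\). The assumption \(\Delta<\infty\) is needed only to make the bound meaningful.

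For the second inequality, I will use the elementary identity
\[
p(1-p)=\tfrac14-\bigl(p-\tfrac12\bigr)^2\le\tfrac14 .
\]
Equality holds if and only if \(p=\tfrac12\). This gives the stated chain
\[
\mathrm{Var}[h\mid z]\le p(1-p)\Delta^2\le\tfrac14\Delta^2 .
\]

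The only real subtlety is the tightness claim, because the statement is ambiguous about what must be attained. At \(p=\tfrac12\) the second inequality is an equality, so \(p=\tfrac12\) is exactly the switching rate that maximizes the envelope \(p(1-p)\Delta^2\). Attainment of the first inequality is a separate question. It holds whenever the supremum defining \(\Delta\) is achieved at some \(z^\ast\), and then \(\mathrm{Var}[h\mid z^\ast]=\Delta^2/4\) exactly at \(p=\tfrac12\). If the supremum is only approached, for example asymptotically as \(z\to-\infty\) for the ReLU/Tanh pair where the gap tends to \(1\), I will state tightness as a supremum. Concretely, I will take a sequence \(z_n\) with gap tending to \(\Delta\) and conclude \(\sup_z\mathrm{Var}[h\mid z]=p(1-p)\Delta^2\). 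I will interpret ``tight'' in this supremum sense, which holds for every activation pair with \(\Delta<\infty\), and note the attained case as a special instance.
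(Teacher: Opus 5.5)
Your proof is correct and follows the paper's argument: Lemma~\ref{lem:moments}, the pointwise bound $(\sigma_{\mathrm{alt}}(z)-\sigma_{\mathrm{orig}}(z))^2\le\Delta^2$, and $p(1-p)\le\frac14$ with equality at $p=\frac12$; your supremum reading of tightness is a more careful version of what the paper leaves implicit. One slip in your aside: ReLU--Tanh has $\Delta=\infty$, because $\tanh z - z\to-\infty$ as $z\to+\infty$ (as the paper notes after Remark~\ref{rem:per-unit}), so it is not an example of a finite but unattained supremum; ReLU--ELU with $\alpha=1$ is such an example, since its gap $e^{z}-1$ on $z\le 0$ (and $0$ for $z>0$) only approaches magnitude $1$ without reaching it.
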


\begin{proof}
From Lemma~\ref{lem:moments},
\[
\mathrm{Var}[h\mid z]
=
p(1-p)
\left(
\sigma_{\mathrm{alt}}(z)
-
\sigma_{\mathrm{orig}}(z)
\right)^2.
\]
By definition of \(\Delta\),
\[
\left(
\sigma_{\mathrm{alt}}(z)
-
\sigma_{\mathrm{orig}}(z)
\right)^2
\le
\Delta^2.
\]
Also, for \(p\in[0,1]\), we have \(p(1-p)\le \frac{1}{4}\), with equality at \(p=\frac{1}{2}\). 
Combining these two facts proves the result.
\end{proof}

\begin{remark}[Per-unit bound only]
\label{rem:per-unit}
The bound in Theorem~\ref{thm:bounded} applies to a single activation unit. 
It does not directly imply a bound on the full network prediction variance, because per-unit perturbations can be amplified, damped, or combined as they propagate through later layers.
\end{remark}

For activation pairs with unbounded global gap, such as ReLU--Tanh, the same argument gives a bound on any bounded pre-activation range. 
In particular, if \(z\in[-R,R]\), one may replace \(\Delta\) in Equation~\eqref{eq:supp_activation_gap} by
\[
\Delta_R
=
\sup_{z\in[-R,R]}
\left|
\sigma_{\mathrm{alt}}(z)
-
\sigma_{\mathrm{orig}}(z)
\right|,
\]
which is finite whenever the activation functions are finite on the interval.

For some activation pairs, the global gap \(\Delta\) can be computed explicitly. 
The following corollary gives this value for the ReLU--SiLU pair, which is used in our classification experiments.

\begin{corollary}[ReLU--SiLU activation-gap constant]
\label{cor:silu}
Let
\[
\sigma_{\mathrm{orig}}(z)
=
\mathrm{ReLU}(z)
=
\max(0,z),
\qquad
\sigma_{\mathrm{alt}}(z)
=
\mathrm{SiLU}(z)
=
z\,\mathrm{sigmoid}(z).
\]
Define the gap function
\begin{equation}
\phi(z)
=
\sigma_{\mathrm{alt}}(z)
-
\sigma_{\mathrm{orig}}(z).
\label{eq:supp_silu_gap}
\end{equation}
Then
\begin{equation}
\phi(z)
=
\begin{cases}
-z\,\mathrm{sigmoid}(-z), & z \ge 0,\\[3pt]
z\,\mathrm{sigmoid}(z), & z < 0.
\end{cases}
\label{eq:supp_silu_gap_piecewise}
\end{equation}
The function \(\phi\) is non-positive and even, meaning \(\phi(-a)=\phi(a)\). 
Moreover,
\begin{equation}
\Delta_{\mathrm{SiLU}}
=
\sup_{z\in\mathbb{R}}|\phi(z)|
=
z^\star \mathrm{sigmoid}(-z^\star)
\approx
0.27846,
\label{eq:supp_silu_delta}
\end{equation}
where \(z^\star\approx 1.27846\) is the unique positive solution of
\begin{equation}
e^{z^\star}(z^\star-1)=1,
\label{eq:supp_silu_root}
\end{equation}
equivalently,
\[
z^\star \mathrm{sigmoid}(z^\star)=1.
\]
\end{corollary}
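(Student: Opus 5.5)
The plan is to verify the piecewise formula for $\phi$ directly, read off its sign and parity, and then reduce $\sup_{z\in\mathbb{R}}|\phi(z)|$ to a one-variable maximization on $(0,\infty)$ handled by calculus.

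\textbf{Piecewise formula, sign and parity.} For $z\ge 0$ we have $\mathrm{ReLU}(z)=z$. Using $1-\mathrm{sigmoid}(z)=\mathrm{sigmoid}(-z)$,
\[
\phi(z)=z\,\mathrm{sigmoid}(z)-z=-z\,\mathrm{sigmoid}(-z).
\]
For $z<0$ we have $\mathrm{ReLU}(z)=0$, so $\phi(z)=z\,\mathrm{sigmoid}(z)$. This is exactly Equation~\eqref{eq:supp_silu_gap_piecewise}. Both branches are non-positive, since the sigmoid is positive and the prefactors $-z$ (for $z\ge0$) and $z$ (for $z<0$) are non-positive. For evenness, take $a>0$: the second branch gives $\phi(-a)=-a\,\mathrm{sigmoid}(-a)$, which equals $\phi(a)$ from the first branch. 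Hence $\sup_{\mathbb{R}}|\phi|=\sup_{z\ge0} g(z)$ with
\[
g(z)=z\,\mathrm{sigmoid}(-z)=\frac{z}{1+e^{z}}.
\]

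\textbf{Maximizing $g$.} We have $g(0)=0$, $g>0$ on $(0,\infty)$, and $g(z)\to0$ as $z\to\infty$, so the supremum is attained at an interior critical point. Differentiating,
\[
g'(z)=\frac{1+e^z-z e^z}{(1+e^z)^2},
\]
whose sign is that of $1-e^z(z-1)$. Set $k(z)=e^z(z-1)$. Then $k(0)=-1$, $k'(z)=z e^z>0$ for $z>0$, and $k(z)\to\infty$. So $k$ is strictly increasing on $(0,\infty)$ and meets the value $1$ exactly once, at some $z^\star$. This gives uniqueness of the positive root of Equation~\eqref{eq:supp_silu_root}. It also shows that $g'$ is positive before $z^\star$ and negative after it, so $z^\star$ is the global maximizer and $\Delta_{\mathrm{SiLU}}=g(z^\star)=z^\star\,\mathrm{sigmoid}(-z^\star)$.

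\textbf{Equivalent forms and numerics.} The root equation gives $e^{z^\star}=1/(z^\star-1)$, which forces $z^\star>1$. Substituting, $1+e^{z^\star}=z^\star/(z^\star-1)$, so
\[
\mathrm{sigmoid}(z^\star)=\frac{e^{z^\star}}{1+e^{z^\star}}=\frac{1}{z^\star}.
\]
This is the equivalent form $z^\star\,\mathrm{sigmoid}(z^\star)=1$. It also yields the neat identity
\[
\Delta_{\mathrm{SiLU}}=\frac{z^\star}{1+e^{z^\star}}=z^\star-1,
\]
which explains why the two stated decimals share their fractional digits. The last step is to pin down $z^\star\approx1.27846$ by bisection or Newton's method on $k(z)=1$. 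One checks $k(1.278)<1<k(1.279)$, then refines to five digits, and the value $\Delta_{\mathrm{SiLU}}\approx0.27846$ follows from $\Delta_{\mathrm{SiLU}}=z^\star-1$.

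\textbf{Main obstacle.} The only real work is the uniqueness-and-maximality argument, which the monotonicity of $k$ settles, together with the numerical certification. The rest is routine algebra.
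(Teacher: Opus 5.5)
Your proposal is correct and follows the paper's route: you derive the piecewise form via $1-\mathrm{sigmoid}(z)=\mathrm{sigmoid}(-z)$, use evenness to reduce to $z>0$, and locate the critical point from $e^{z}(z-1)=1$. Your monotonicity argument for $k(z)=e^{z}(z-1)$ additionally supplies the uniqueness of the positive root and the global maximality of $z^\star$, both of which the paper asserts without proof, and your identity $\Delta_{\mathrm{SiLU}}=z^\star-1$ usefully explains why the two stated numerical values agree.
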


\begin{proof}
For \(z\ge 0\), we have
\[
\phi(z)
=
z\,\mathrm{sigmoid}(z)-z
=
-z\bigl(1-\mathrm{sigmoid}(z)\bigr)
=
-z\,\mathrm{sigmoid}(-z).
\]
For \(z<0\), \(\mathrm{ReLU}(z)=0\), so
\[
\phi(z)
=
z\,\mathrm{sigmoid}(z).
\]
For any \(a>0\),
\[
\phi(a)
=
-a\,\mathrm{sigmoid}(-a),
\qquad
\phi(-a)
=
-a\,\mathrm{sigmoid}(-a),
\]
so \(\phi(-a)=\phi(a)\). 
Thus, the gap is symmetric around zero.

It remains to find the maximum magnitude. 
For \(z>0\),
\[
\phi(z)
=
-z\,\mathrm{sigmoid}(-z).
\]
Differentiating gives
\[
\phi'(z)
=
\mathrm{sigmoid}(-z)
\left(
z\,\mathrm{sigmoid}(z)-1
\right).
\]
Since \(\mathrm{sigmoid}(-z)>0\), the stationary point satisfies
\[
z\,\mathrm{sigmoid}(z)=1.
\]
This is equivalent to
\[
z=1+e^{-z}
\quad\Longleftrightarrow\quad
e^z(z-1)=1.
\]
Solving this equation gives \(z^\star\approx 1.27846\), and therefore
\[
|\phi(z^\star)|
=
z^\star \mathrm{sigmoid}(-z^\star)
\approx
0.27846.
\]
By symmetry, the negative side has the same maximum magnitude.
\end{proof}

\begin{corollary}[Numerical bound at \(p=\tfrac{1}{2}\)]
\label{cor:numeric}
For ReLU--SiLU switching with \(p=\frac{1}{2}\),
\begin{equation}
\sup_{z\in\mathbb{R}}
\mathrm{Var}[h\mid z]
=
\frac{1}{4}\Delta_{\mathrm{SiLU}}^2
\approx
0.01939.
\label{eq:supp_numeric_bound}
\end{equation}
\end{corollary}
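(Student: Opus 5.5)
The plan is to combine Lemma~\ref{lem:moments} with Corollary~\ref{cor:silu}, and then check the constant numerically. Setting \(p=\tfrac12\) in Equation~\eqref{eq:supp_unit_variance} gives, for every fixed \(z\),
\[
\mathrm{Var}[h\mid z]=\tfrac14\,\phi(z)^2 ,
\]
where \(\phi\) is the gap function of Equation~\eqref{eq:supp_silu_gap}. So the supremum of the variance over \(z\) is \(\tfrac14\sup_z \phi(z)^2\).

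Since \(t\mapsto t^2\) is increasing on \([0,\infty)\), we have \(\sup_z\phi(z)^2=(\sup_z|\phi(z)|)^2=\Delta_{\mathrm{SiLU}}^2\). Theorem~\ref{thm:bounded} already gives the upper bound \(\tfrac14\Delta_{\mathrm{SiLU}}^2\). The remaining point is that this is an equality, i.e.\ the supremum is attained, so that the statement holds with ``\(=\)''. For this I will invoke Corollary~\ref{cor:silu}: \(|\phi|\) attains its maximum at \(z^\star\), and by evenness also at \(-z^\star\). Hence \(\mathrm{Var}[h\mid z^\star]=\tfrac14\Delta_{\mathrm{SiLU}}^2\) exactly.

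The main obstacle is making sure the stationary point in Corollary~\ref{cor:silu} is a global maximum of \(|\phi|\) on \((0,\infty)\), and not merely a critical point. I will argue it as follows.
\begin{itemize}
\item The function \(|\phi(z)|=z\,\mathrm{sigmoid}(-z)\) is continuous and vanishes at \(z=0\) and as \(z\to\infty\), so it attains an interior maximum.
\item The map \(g(z)=z\,\mathrm{sigmoid}(z)\) is strictly increasing on \((0,\infty)\). Its derivative is \(\mathrm{sigmoid}(z)\bigl(1+z\,\mathrm{sigmoid}(-z)\bigr)>0\). Hence \(g(z)=1\) has a unique solution \(z^\star\).
\item From the derivative formula for \(\phi\), \(|\phi|\) is increasing before \(z^\star\) and decreasing after it, so \(z^\star\) is the global maximizer.
\end{itemize}

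For the numerics, the root condition gives a simplification: \(z^\star\mathrm{sigmoid}(-z^\star)=z^\star\bigl(1-\mathrm{sigmoid}(z^\star)\bigr)=z^\star-1\). Hence \(\Delta_{\mathrm{SiLU}}=z^\star-1\approx0.27846\), which is consistent with Corollary~\ref{cor:silu}. Squaring gives approximately \(0.07754\), and dividing by \(4\) gives approximately \(0.019385\), which rounds to \(0.01939\).
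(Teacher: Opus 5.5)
Your proposal is correct and takes essentially the same route as the paper. The paper gives no separate proof for this corollary and treats it as immediate from Lemma~\ref{lem:moments} at \(p=\tfrac12\) together with Corollary~\ref{cor:silu}; your monotonicity argument for \(g(z)=z\,\mathrm{sigmoid}(z)\) supplies the global-maximality step that the paper's proof of Corollary~\ref{cor:silu} only asserts, and your identity \(\Delta_{\mathrm{SiLU}}=z^\star-1\) is a clean check of the constant. One small remark: you do not need the supremum to be attained to get the equality, since \(\sup_z\phi(z)^2=(\sup_z|\phi(z)|)^2\) already follows from continuity and monotonicity of \(t\mapsto t^2\) on \([0,\infty)\).
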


\subsection{Convergence of the Empirical Estimators}
\label{sec:supp_convergence}

We now analyze the Monte Carlo estimators used at inference time. 
For a fixed input \(x\), each stochastic forward pass produces a random prediction \(\eta^{(t)}\). 
Throughout this subsection, we assume that masks are resampled independently across stochastic forward passes, so that \(\eta^{(1)},\ldots,\eta^{(T)}\) are i.i.d. draws from the stochastic predictor for the fixed input \(x\). 
The empirical mean \(\hat{\mu}_T(x)\) estimates the expected prediction, while the empirical variance \(\hat{U}_T(x)\) estimates the predictive variance induced by stochastic activation switching.

\begin{theorem}[Unbiasedness and concentration of \(\hat{\mu}_T\)]
\label{thm:mean}
The empirical mean is an unbiased estimator of the population mean:
\begin{equation}
\mathbb{E}[\hat{\mu}_T(x)] = \mu_\infty(x).
\label{eq:mean_unbiased}
\end{equation}
Assume the scalar output of interest is bounded almost surely, so that
\[
\eta^{(t)} \in [a,b]
\qquad
\text{for all } t.
\]
This assumption holds for post-softmax class probabilities, pixel-wise probabilities in semantic segmentation, and sigmoid-bounded scalar outputs, but not for unbounded logits without additional assumptions. 
Then, for any \(\delta>0\),
\begin{equation}
\Pr\!\left(
\left|\hat{\mu}_T(x)-\mu_\infty(x)\right|
\ge
\delta
\right)
\le
2\exp\!\left(
-\frac{2T\delta^2}{(b-a)^2}
\right).
\label{eq:hoeffding_mean}
\end{equation}
For a \(K\)-dimensional bounded output, applying Equation~\eqref{eq:hoeffding_mean} coordinate-wise and using a union bound replaces the leading constant \(2\) by \(2K\). 
For pixel-wise outputs, the same argument applies coordinate-wise to each pixel and class probability.
\end{theorem}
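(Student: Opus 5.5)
The plan has three parts: unbiasedness via linearity of expectation, concentration via Hoeffding's inequality, and the vector case via a union bound.

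\textbf{Unbiasedness.} By the i.i.d. assumption stated at the start of this subsection, each \(\eta^{(t)}\) has the same law as a single pass of the stochastic predictor in Equation~\eqref{eq:supp_stochastic_prediction}. Hence \(\mathbb{E}[\eta^{(t)}]=\mu_\infty(x)\) for every \(t\), by the definition in Equation~\eqref{eq:supp_population_moments}. Linearity of expectation applied to \(\hat{\mu}_T(x)=\frac{1}{T}\sum_t \eta^{(t)}\) then gives Equation~\eqref{eq:mean_unbiased}.

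One point needs a remark. The randomness of \(\eta^{(t)}\) is the whole mask collection \(\sigma^{(t)}\), including upstream masks that alter deeper pre-activations. We make no use of the per-unit conditional analysis of Lemma~\ref{lem:moments}. We only need \(\eta^{(t)}\) to be integrable, and this follows from boundedness in the second part. In the unbounded case one must simply assume \(\mathbb{E}|\eta^{(t)}|<\infty\).

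\textbf{Concentration.} Assume \(\eta^{(t)}\in[a,b]\) almost surely. The variables \(\eta^{(1)},\dots,\eta^{(T)}\) are then independent and bounded. Hoeffding's inequality applied to their sum \(S_T=\sum_t\eta^{(t)}\) gives
\[
\Pr\bigl(|S_T-\mathbb{E}S_T|\ge T\delta\bigr)\le 2\exp\!\left(-\frac{2(T\delta)^2}{T(b-a)^2}\right).
\]
Dividing the event by \(T\) yields Equation~\eqref{eq:hoeffding_mean}. For the examples listed in the statement we take \([a,b]=[0,1]\): post-softmax probabilities and sigmoid outputs lie in this interval for every mask configuration, regardless of how perturbations propagate.

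\textbf{Vector and pixel-wise outputs.} For a \(K\)-dimensional output, apply the scalar bound to each coordinate \(k\). Each coordinate sequence \((\eta^{(t)}_k)_t\) is i.i.d. and bounded, although the coordinates are dependent on one another. The event \(\max_k|\hat{\mu}_{T,k}-\mu_{\infty,k}|\ge\delta\) is the union of the \(K\) coordinate events, so a union bound gives the constant \(2K\). For segmentation, index coordinates by (pixel, class) pairs; the leading constant becomes \(2\) times the number of such pairs. If only a single pixel is of interest, no union bound is needed at all.

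The \textbf{main obstacle} is justifying independence across passes, since Hoeffding's inequality requires it. Independence holds because the masks are freshly resampled in each pass while \(\theta\) and \(x\) stay fixed, so \(\eta^{(t)}\) is a measurable function of \(\sigma^{(t)}\) alone. We will state this explicitly. We will also note that dependence among coordinates is exactly why the union bound, rather than a product of probabilities, is used.
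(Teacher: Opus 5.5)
Your proposal is correct and follows the paper's proof: linearity of expectation for unbiasedness, Hoeffding's inequality for the i.i.d.\ bounded passes (your rescaling of the sum form with deviation $T\delta$ is right), and a coordinate-wise union bound giving the constant $2K$. Two minor refinements. First, integrability, and in fact boundedness with $x$-dependent endpoints, is automatic here: for fixed $x$ and $\theta$, the output $\eta^{(t)}$ takes one value per mask configuration, and there are only finitely many configurations. Second, a single pixel with a multi-class output still requires a union bound over its classes.
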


\begin{proof}
Unbiasedness follows from linearity of expectation:
\[
\mathbb{E}[\hat{\mu}_T(x)]
=
\mathbb{E}\!\left[
\frac{1}{T}\sum_{t=1}^{T}\eta^{(t)}
\right]
=
\frac{1}{T}\sum_{t=1}^{T}\mathbb{E}[\eta^{(t)}]
=
\mu_\infty(x).
\]
The concentration bound in Equation~\eqref{eq:hoeffding_mean} is Hoeffding's inequality applied to the i.i.d. bounded sequence \(\{\eta^{(t)}\}_{t=1}^{T}\). 
The \(K\)-dimensional version follows by applying the same bound to each coordinate and taking a union bound.
\end{proof}

The empirical variance \(\hat{U}_T(x)\) is computed with normalization \(1/T\), as in the main paper. 
As with the usual sample variance, this estimator is slightly biased for finite \(T\), but the bias vanishes as \(T\) increases.

\begin{lemma}[Bias of \(\hat{U}_T\)]
\label{lem:bias}
The empirical uncertainty estimator satisfies
\begin{equation}
\mathbb{E}[\hat{U}_T(x)]
=
\frac{T-1}{T}V_\infty(x).
\label{eq:variance_bias}
\end{equation}
Therefore, the Bessel-corrected estimator
\[
\hat{U}_T^{\mathrm{u}}
=
\frac{T}{T-1}\hat{U}_T
\]
is unbiased.
\end{lemma}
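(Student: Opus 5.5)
The plan is the standard sample-variance bias computation, carried out under the i.i.d. assumption stated at the start of Section~\ref{sec:supp_convergence}. Fix $x$ and write $\mu=\mu_\infty(x)$ and $V=V_\infty(x)$; assume $V<\infty$, which holds automatically for the bounded outputs considered in Theorem~\ref{thm:mean}. The first step is to center the predictions by setting $\xi_t=\eta^{(t)}-\mu$. The $\xi_t$ are then i.i.d. with $\mathbb{E}[\xi_t]=0$ and $\mathbb{E}[\xi_t^2]=V$. Writing $\bar\xi=\frac1T\sum_t\xi_t=\hat\mu_T(x)-\mu$, the residuals $\eta^{(t)}-\hat\mu_T(x)=\xi_t-\bar\xi$ are unaffected by the centering.

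Next we use the algebraic identity
\[
\sum_{t=1}^T(\xi_t-\bar\xi)^2=\sum_{t=1}^T\xi_t^2-T\bar\xi^2,
\]
which gives $\hat U_T(x)=\frac1T\sum_t\xi_t^2-\bar\xi^2$. We then take expectations of each term separately. The first term has expectation $V$. For the second, expand
\[
\mathbb{E}[\bar\xi^2]=\frac1{T^2}\sum_{s,t}\mathbb{E}[\xi_s\xi_t].
\]
Independence together with zero mean kills every cross term $s\neq t$, leaving $\frac1{T^2}\cdot TV=V/T$. This is simply the statement $\mathrm{Var}[\hat\mu_T]=V/T$, the second-moment counterpart of the unbiasedness in Theorem~\ref{thm:mean}. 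Combining the two terms gives $\mathbb{E}[\hat U_T]=V-V/T=\frac{T-1}{T}V$, which is Equation~\eqref{eq:variance_bias}.

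The Bessel-corrected claim then follows by linearity, assuming $T\ge2$ so that the factor $\frac{T}{T-1}$ is defined:
\[
\mathbb{E}[\hat U_T^{\mathrm u}]=\frac{T}{T-1}\cdot\frac{T-1}{T}V=V.
\]
For vector-valued or pixel-wise outputs, the argument is applied coordinate-wise, since the squares in Equation~\eqref{eq:supp_empirical_estimators} are taken coordinate-wise.

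There is no real obstacle here. The only point needing care is that the cross terms vanish, and this depends on masks being resampled independently across passes, as assumed at the start of Section~\ref{sec:supp_convergence}. Within a single pass the units may be dependent, for example under channel- or layer-level sharing or through upstream randomness, but this does not matter because the whole pass output $\eta^{(t)}$ is treated as one draw.
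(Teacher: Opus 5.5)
Your proposal is correct and follows essentially the same route as the paper: both use the sample-variance identity $\sum_t(\eta^{(t)}-\hat{\mu}_T)^2=\sum_t(\eta^{(t)})^2-T\hat{\mu}_T^2$ together with $\mathbb{E}[\hat{\mu}_T^2]=\mu_\infty^2+V_\infty/T$, which comes from independence across passes. Centering at $\mu_\infty$ first is only cosmetic. Your explicit notes that $T\ge 2$ is needed for the Bessel correction, and that dependence within a single pass is harmless because each $\eta^{(t)}$ is one i.i.d. draw, are sound points the paper leaves implicit.
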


\begin{proof}
Let \(\eta\) denote a generic draw from the same distribution as each \(\eta^{(t)}\). 
Using the sample-variance identity,
\[
\sum_{t=1}^{T}
\left(
\eta^{(t)}-\hat{\mu}_T
\right)^2
=
\sum_{t=1}^{T}\eta^{(t)2}
-
T\hat{\mu}_T^2.
\]
Taking expectations gives
\[
\mathbb{E}\!\left[
\sum_{t=1}^{T}\eta^{(t)2}
\right]
=
T\mathbb{E}[\eta^2],
\]
and
\[
\mathbb{E}\!\left[T\hat{\mu}_T^2\right]
=
T\left(
\mu_\infty^2+\frac{V_\infty}{T}
\right).
\]
Dividing by \(T\) yields
\[
\mathbb{E}[\hat{U}_T]
=
V_\infty\left(1-\frac{1}{T}\right)
=
\frac{T-1}{T}V_\infty.
\]
\end{proof}

\begin{theorem}[Convergence of \(\hat{U}_T\)]
\label{thm:UT}
Under the bounded-output assumption of Theorem~\ref{thm:mean}, the empirical variance converges to the population predictive variance:
\begin{equation}
\hat{U}_T(x)
\to
V_\infty(x)
\quad
\text{in } L^2
\quad
\text{as } T\to\infty.
\label{eq:variance_convergence}
\end{equation}
Moreover, its root-mean-squared error is of order \(O(T^{-1/2})\).
\end{theorem}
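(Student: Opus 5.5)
We will decompose the mean-squared error of \(\hat{U}_T\) into squared bias plus variance,
\[
\mathbb{E}\!\left[(\hat{U}_T-V_\infty)^2\right]
=
\left(\mathbb{E}[\hat{U}_T]-V_\infty\right)^2
+
\mathrm{Var}[\hat{U}_T],
\]
and show that both terms vanish, at rates \(O(T^{-2})\) and \(O(T^{-1})\) respectively. The bias term is immediate from Lemma~\ref{lem:bias}: it equals \(V_\infty/T\), so its square is \(V_\infty^2/T^2\). Since \(\eta^{(t)}\in[a,b]\), we also have \(V_\infty\le (b-a)^2/4\) by Popoviciu's inequality, so this contribution is at most \((b-a)^4/(16T^2)\).

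For the variance term, we first center the variables. Let \(\xi_t=\eta^{(t)}-\mu_\infty\), which are i.i.d., mean zero, and satisfy \(|\xi_t|\le b-a\). Because the empirical variance is shift invariant,
\[
\hat{U}_T=\frac{1}{T}\sum_{t}\xi_t^2-\bar{\xi}^2,
\qquad
\bar{\xi}=\frac{1}{T}\sum_t\xi_t .
\]
We then use \(\mathrm{Var}[A-B]\le 2\mathrm{Var}[A]+2\mathrm{Var}[B]\le 2\mathrm{Var}[A]+2\mathbb{E}[B^2]\). The first piece is easy: \(\mathrm{Var}\bigl[\frac{1}{T}\sum_t\xi_t^2\bigr]=\mathrm{Var}[\xi^2]/T\le (b-a)^4/T\), using boundedness and independence. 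For the second piece, we expand
\[
\mathbb{E}[\bar{\xi}^4]=T^{-4}\left(T\,\mathbb{E}[\xi^4]+3T(T-1)V_\infty^2\right),
\]
since all cross terms containing an isolated index vanish by independence and zero mean. This gives \(\mathbb{E}[\bar{\xi}^4]=O(T^{-2})\) with constants depending only on \((b-a)\).

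Combining the two terms yields
\[
\mathbb{E}\!\left[(\hat{U}_T-V_\infty)^2\right]\le \frac{C(b-a)^4}{T}
\]
for an absolute constant \(C\). This establishes \(L^2\) convergence, and taking square roots gives an RMSE of \(O(T^{-1/2})\). For \(K\)-dimensional or pixel-wise outputs, the same argument applies coordinate-wise.

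The only step needing care is the fourth-moment bookkeeping for \(\bar{\xi}\). Even if the count of surviving index patterns is done loosely, the crude bound \(\bar{\xi}^2\le (b-a)|\bar{\xi}|\) gives
\[
\mathbb{E}[\bar{\xi}^4]\le (b-a)^2\,\mathbb{E}[\bar{\xi}^2]=(b-a)^2\,\frac{V_\infty}{T},
\]
which is still \(O(T^{-1})\) and suffices for the stated rate. So the conclusion does not hinge on getting the combinatorial constant exactly right.
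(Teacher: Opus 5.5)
Your proof is correct. It shares the paper's skeleton: the same bias--variance split of the mean-squared error, and the same bias step via Lemma~\ref{lem:bias} plus Popoviciu. A minor slip: the bias is $-V_\infty/T$, not $+V_\infty/T$, which is immaterial once squared.

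The two arguments differ in how you control $\mathrm{Var}(\hat{U}_T)$.
\begin{itemize}
\item The paper rewrites the Bessel-corrected estimator as the degree-2 $U$-statistic with kernel $\frac{1}{2}(u-v)^2 \in [0,(b-a)^2/2]$. It then cites the standard variance bound for bounded $U$-statistics (Hoeffding's decomposition) and rescales by $\bigl((T-1)/T\bigr)^2$.
\item You center at $\mu_\infty$ and write $\hat{U}_T=\frac{1}{T}\sum_t\xi_t^2-\bar{\xi}^2$. You bound the first piece as an i.i.d.\ average of bounded variables, and the second through $\mathbb{E}[\bar{\xi}^4]$. You use either the exact fourth-moment count, which is right, or the cruder $\bar{\xi}^4\le(b-a)^2\bar{\xi}^2$, which also suffices.
\end{itemize}

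What your route buys is self-containment with explicit constants. Your bounds give $C=41/16$, and no external $U$-statistic result is needed. It also isolates the mean-correction $\bar{\xi}^2$ as a lower-order effect, so the leading behaviour is $\mathrm{Var}[\xi^2]/T$.

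What the paper's route buys is brevity and a template that extends to other pairwise-kernel estimators. As written, though, it leans on a cited bound whose constant is left implicit.
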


\begin{proof}
We decompose the mean-squared error into squared bias and variance:
\begin{equation}
\mathbb{E}\!\left[
\left(
\hat{U}_T - V_\infty
\right)^2
\right]
=
\left(
\mathbb{E}[\hat{U}_T]-V_\infty
\right)^2
+
\mathrm{Var}(\hat{U}_T).
\label{eq:mse_decomposition}
\end{equation}

By Lemma~\ref{lem:bias}, the bias is
\[
\mathbb{E}[\hat{U}_T]-V_\infty
=
-\frac{V_\infty}{T}.
\]
Since the output is bounded in \([a,b]\), Popoviciu's inequality gives
\[
V_\infty
\le
\frac{(b-a)^2}{4}.
\]
Therefore, the squared bias is bounded by
\[
\frac{(b-a)^4}{16T^2}
=
O(T^{-2}).
\]

For the variance term, the Bessel-corrected estimator can be written as the degree-2 \(U\)-statistic
\begin{equation}
\hat{U}_T^{\mathrm{u}}
=
\frac{1}{T(T-1)}
\sum_{i\neq j}
\frac{1}{2}
\left(
\eta^{(i)}-\eta^{(j)}
\right)^2.
\label{eq:u_stat_variance}
\end{equation}
Its kernel is
\[
h(u,v)
=
\frac{1}{2}(u-v)^2.
\]
Because \(u,v\in[a,b]\), the kernel is bounded:
\[
h(u,v)
\in
\left[
0,\frac{(b-a)^2}{2}
\right].
\]
The standard variance bound for bounded \(U\)-statistics gives
\[
\mathrm{Var}\!\left(
\hat{U}_T^{\mathrm{u}}
\right)
=
O\!\left(
\frac{(b-a)^4}{T}
\right).
\]
Since
\[
\mathrm{Var}(\hat{U}_T)
=
\left(
\frac{T-1}{T}
\right)^2
\mathrm{Var}\!\left(
\hat{U}_T^{\mathrm{u}}
\right),
\]
the same \(O(T^{-1})\) rate holds for \(\mathrm{Var}(\hat{U}_T)\).

Combining the \(O(T^{-2})\) squared-bias term with the \(O(T^{-1})\) variance term gives
\[
\mathbb{E}\!\left[
\left(
\hat{U}_T - V_\infty
\right)^2
\right]
=
O(T^{-1}).
\]
Taking the square root gives root-mean-squared error \(O(T^{-1/2})\), proving \(L^2\) convergence.
\end{proof}

The constants in the above concentration and \(U\)-statistic bounds are conservative. 
In practice, the number of inference passes should be chosen by balancing uncertainty stability and computational cost, as studied in the inference-pass ablation of the main paper.

\section{Computational Cost and Timing Analysis}
\label{sec:supp_timing}

We analyze the computational cost of stochastic activation switching in terms of both training and inference time.
The analysis separates the overhead introduced by the stochastic activation mechanism itself from the cost of performing multiple stochastic forward passes.

\paragraph{Measurement protocol.}
All timings are measured on a single NVIDIA H100 GPU.
For CIFAR-10, training time corresponds to one training epoch and inference time corresponds to evaluation over the complete test set.
All methods use the same backbone and input setting.
Our method and MC Dropout use \(T=5\) stochastic forward passes, while Deep Ensembles use five independently trained models.
The reported times therefore capture the complete end-to-end cost of each uncertainty-estimation procedure rather than only the deterministic backbone execution.

\paragraph{Sampling overhead.}
To isolate the sampling overhead under the same architecture and input setting, we compare the deterministic model, MC Dropout, Deep Ensembles, and our method on CIFAR-10.
The deterministic model requires \(40.05\,\mathrm{s}\) for one training epoch and \(2.83\,\mathrm{s}\) for full test-set inference.
MC Dropout requires \(40.97\,\mathrm{s}\) and \(13.66\,\mathrm{s}\), respectively, while our method requires \(42.73\pm0.36\,\mathrm{s}\) for training and \(14.21\pm0.18\,\mathrm{s}\) for inference.
Thus, our method has training and inference costs close to MC Dropout under the same \(T=5\) sampling setting, with only a small additional overhead from stochastic activation switching.
A five-member Deep Ensemble requires \(164.52\,\mathrm{s}\) for training and \(13.27\,\mathrm{s}\) for inference.
Therefore, our method has a similar multi-pass inference cost to Deep Ensembles while requiring substantially less training and model storage, since only a single trained network is retained instead of five independently trained models.

\begin{table}[!h]
\begin{center}
\begin{tabular}{|l|c|c|}
\hline
\textbf{Method} & \textbf{Training Time (s) $\downarrow$} & \textbf{Inference Time (s) $\downarrow$} \\
\hline\hline
Deterministic & \textbf{40.05} & \textbf{2.83} \\
MC Dropout & \underline{40.97} & 13.66 \\
DDU & 49.93 & \underline{2.86} \\
TTA & \textbf{40.05} & 18.43 \\
Masksembles & 43.30 & 3.16 \\
Deep Ensemble & 194.52 & 13.27 \\
Laplace Redux & 57.05 & 3.07 \\
\rowcolor{oursgreen}
Ours & $42.73\pm0.36$ & $14.21\pm0.18$ \\
\hline
\end{tabular}
\end{center}
\caption{Training and inference time on CIFAR-10.}
\label{tab:supp_classification_timing}
\end{table}

\begin{table}[!h]
\begin{center}
\resizebox{\textwidth}{!}{
\begin{tabular}{|l|cc|cc|cc|cc|}
\hline
\multirow{2}{*}{\textbf{Method}}
& \multicolumn{2}{c|}{\textbf{Massachusetts}}
& \multicolumn{2}{c|}{\textbf{DRIVE}}
& \multicolumn{2}{c|}{\textbf{LiTS}}
& \multicolumn{2}{c|}{\textbf{CREMI}} \\
\cline{2-9}
& \textbf{Train} & \textbf{Infer.}
& \textbf{Train} & \textbf{Infer.}
& \textbf{Train} & \textbf{Infer.}
& \textbf{Train} & \textbf{Infer.} \\
\hline\hline

Deterministic
& \textbf{54.41} & \textbf{1.52}
& \underline{37.30} & \textbf{24.30}
& \textbf{28.27} & \textbf{10.98}
& \textbf{36.98} & 9.22 \\

MC Dropout
& \underline{54.54} & 4.46
& \textbf{38.28} & \underline{140.43}
& 32.92 & 55.78
& 39.27 & 52.90 \\

DDU
& 74.41 & 3.01
& 51.70 & 26.40
& 48.38 & \underline{11.09}
& 52.45 & \textbf{9.36} \\

Masksembles
& 57.91 & 1.90
& 39.40 & 29.60
& 33.73 & 12.51
& 37.64 & \underline{12.51} \\

Deep Ensemble
& 261.79 & \underline{4.25}
& 214.74 & 135.11
& 103.84 & 54.47
& 189.48 & 49.42 \\

\rowcolor{oursgreen}
Ours
& $58.46\pm3.44$ & $4.89\pm0.62$
& $38.80\pm2.70$ & $145.50\pm2.10$
& $\underline{32.10\pm0.77}$ & $59.69\pm0.58$
& $\underline{38.03\pm2.36}$ & $62.57\pm1.33$ \\

\hline
\end{tabular}}
\end{center}
\caption{Training and inference time on the segmentation benchmarks. All times are reported in seconds.}
\label{tab:supp_segmentation_timing}
\end{table}

Tables~\ref{tab:supp_classification_timing} and~\ref{tab:supp_segmentation_timing}
compare the wall-clock computational cost across classification and segmentation.
Training time corresponds to one complete epoch over the training set; for Deep Ensembles,
the reported value includes one epoch for each of the five independently trained members.
Inference time corresponds to processing the complete test set using the full inference
procedure of each method.

Across both tasks, stochastic activation switching retains a single-model training profile
and has training cost close to other single-model approaches, while avoiding the repeated
optimization and model storage required by Deep Ensembles.
Its main computational overhead occurs at inference because uncertainty estimation requires
multiple stochastic forward passes.
Accordingly, its inference cost is generally comparable to other multi-evaluation approaches,
including MC Dropout and Deep Ensembles, while TTA similarly incurs additional cost from
repeated augmented evaluations.
In contrast, deterministic inference, DDU, Laplace Redux, and Masksembles generally provide
lower wall-clock inference latency.

These results clarify that the main computational advantage of stochastic activation
switching lies in single-model training and storage rather than minimum inference latency.
The additional inference computation provides the predictive diversity used for uncertainty
estimation and accompanies the calibration and OOD-detection improvements reported in the
main paper.

\begin{figure}[t]
    \centering
    \includegraphics[width=0.85\columnwidth]{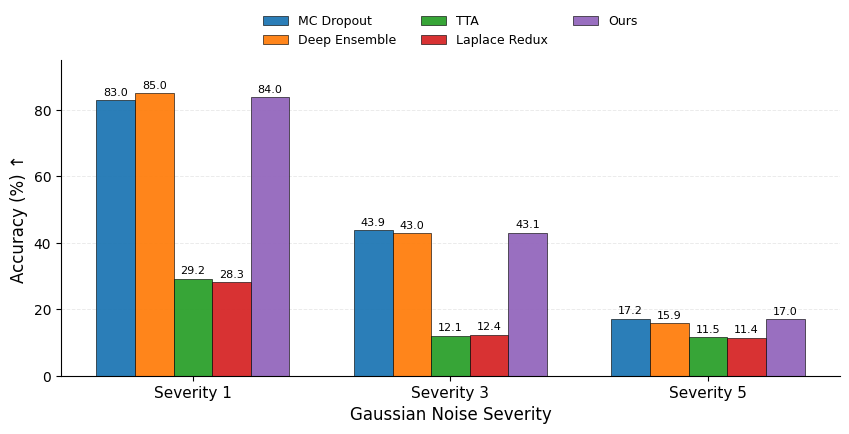}
    \caption{
    Accuracy under CIFAR-10-C Gaussian noise corruption at severities \(1\), \(3\), and \(5\).
    MC Dropout gives slightly higher accuracy under heavier corruption, likely because feature masking acts as a stronger noise-robust regularizer.
    Our method remains close in accuracy while preserving information flow through activation switching instead of feature removal.
    }
    \label{fig:cifar_c_gaussian_accuracy}
\end{figure}

\begin{table}[t]
\centering
\scriptsize
\setlength{\tabcolsep}{3pt}
\renewcommand{\arraystretch}{0.75}
\resizebox{0.75\columnwidth}{!}{%
\begin{tabular}{|c|l|c|c|c|c|}
\hline
\textbf{Severity} & \textbf{Method} & \textbf{Acc. (↑)} & \textbf{NLL (↓)} & \textbf{ECE (↓)} & \textbf{Brier (↓)} \\
\hline\hline

\multirow{5}{*}{1}
& MC Dropout     & 82.95 & \underline{0.517} & \textbf{0.015} & 0.243 \\
& Deep Ensemble  & \textbf{84.96} & \textbf{0.486} & 0.042 & \underline{0.220} \\
& TTA            & 29.20 & 3.030 & 0.401 & 1.019 \\
& Laplace Redux  & 28.29 & 2.315 & 0.213 & 0.872 \\
& \cellcolor{oursgreen}\textbf{Ours}
& \cellcolor{oursgreen}\underline{83.96}
& \cellcolor{oursgreen}0.521
& \cellcolor{oursgreen}\underline{0.040}
& \cellcolor{oursgreen}\textbf{0.217} \\
\hline

\multirow{5}{*}{3}
& MC Dropout     & \textbf{43.89} & \textbf{1.889} & \textbf{0.184} & \textbf{0.756} \\
& Deep Ensemble  & 43.03 & 2.212 & 0.276 & 0.808 \\
& TTA            & 12.08 & 4.118 & 0.429 & 1.152 \\
& Laplace Redux  & 12.39 & 3.043 & 0.222 & 0.993 \\
& \cellcolor{oursgreen}\textbf{Ours}
& \cellcolor{oursgreen}\underline{43.11}
& \cellcolor{oursgreen}\underline{2.165}
& \cellcolor{oursgreen}\underline{0.202}
& \cellcolor{oursgreen}\underline{0.767} \\
\hline

\multirow{5}{*}{5}
& MC Dropout     & \textbf{17.22} & \underline{3.548} & \underline{0.263} & \underline{1.012} \\
& Deep Ensemble  & 15.87 & 4.012 & 0.266 & 1.045 \\
& TTA            & 11.51 & 4.011 & 0.353 & 1.089 \\
& Laplace Redux  & 11.44 & 3.863 & 0.288 & \textbf{0.970} \\
& \cellcolor{oursgreen}\textbf{Ours}
& \cellcolor{oursgreen}\underline{16.98}
& \cellcolor{oursgreen}\textbf{2.980}
& \cellcolor{oursgreen}\textbf{0.162}
& \cellcolor{oursgreen}1.085 \\
\hline

\end{tabular}%
}
\caption{
CIFAR-10-C Gaussian noise corruption results at severities \(1\), \(3\), and \(5\).
}
\label{tab:cifar_c_gaussian}
\end{table}

\section{Corruption Robustness}
\label{sec:cifar_c_corruption}

We evaluate classification uncertainty under input corruption using CIFAR-10-C with Gaussian noise at severities \(1\), \(3\), and \(5\).
This setting tests whether uncertainty estimates remain meaningful as the input distribution becomes degraded.
Table~\ref{tab:cifar_c_gaussian} reports the full uncertainty and calibration results, while Figure~\ref{fig:cifar_c_gaussian_accuracy} visualizes the accuracy degradation.

Under mild corruption, our method preserves strong predictive accuracy and obtains the best Brier score, indicating that stochastic activation trajectories provide useful uncertainty without strongly affecting the main prediction.
At moderate and severe corruption, MC Dropout achieves the highest accuracy, suggesting that stronger feature removal can act as a noise-robust regularizer under heavy corruption.
However, this robustness comes from explicitly masking activations and therefore removing part of the intermediate representation.
In contrast, our method introduces micro-level stochasticity through activation switching, which keeps the information flow mostly intact while still producing diverse predictions for uncertainty estimation.
As a result, our method remains close to MC Dropout in accuracy while obtaining competitive uncertainty metrics.
At the highest corruption severity, our method achieves the best NLL \cite{guo2017calibration} and ECE \cite{naeini2015obtaining}, while Laplace Redux obtains the best Brier score but with substantially lower accuracy.

\FloatBarrier

\section{OOD Uncertainty Distributions}
\label{sec:ood_uncertainty_distributions}

To further inspect the OOD behavior of stochastic activation switching, we visualize the distribution of uncertainty scores for ID and OOD samples. 
Figure~\ref{fig:ood_histograms} shows that ID samples usually form compact low-uncertainty distributions, while OOD samples shift toward larger and more dispersed uncertainty values. 
This indicates that the model remains confident on familiar inputs and becomes less certain when the input distribution changes. 
The separation is especially clear for DRIVE$\rightarrow$CHASE-DB1, CIFAR-10$\rightarrow$SVHN, and CREMI$\rightarrow$ISBI, where OOD samples occupy a broader high-uncertainty range. 
For Massachusetts$\rightarrow$RoadTracer, the two distributions overlap more strongly because both datasets contain aerial road imagery with similar visual structure; nevertheless, the OOD distribution still shows a wider uncertainty spread. 
This supports the quantitative OOD results and shows that stochastic activation trajectories provide an interpretable signal for detecting distribution shift.

\begin{figure*}[h]
\centering
\includegraphics[width=0.46\textwidth]{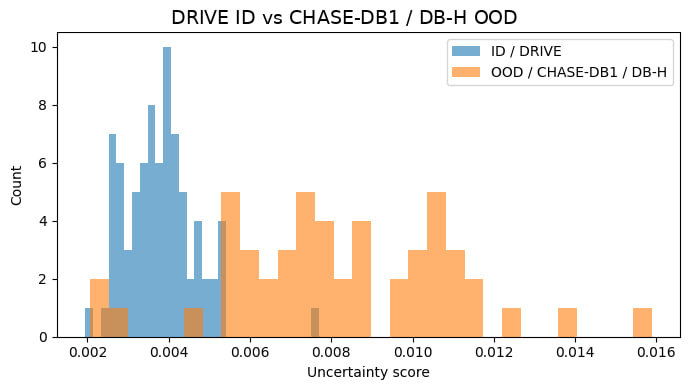}
\hfill
\includegraphics[width=0.46\textwidth]{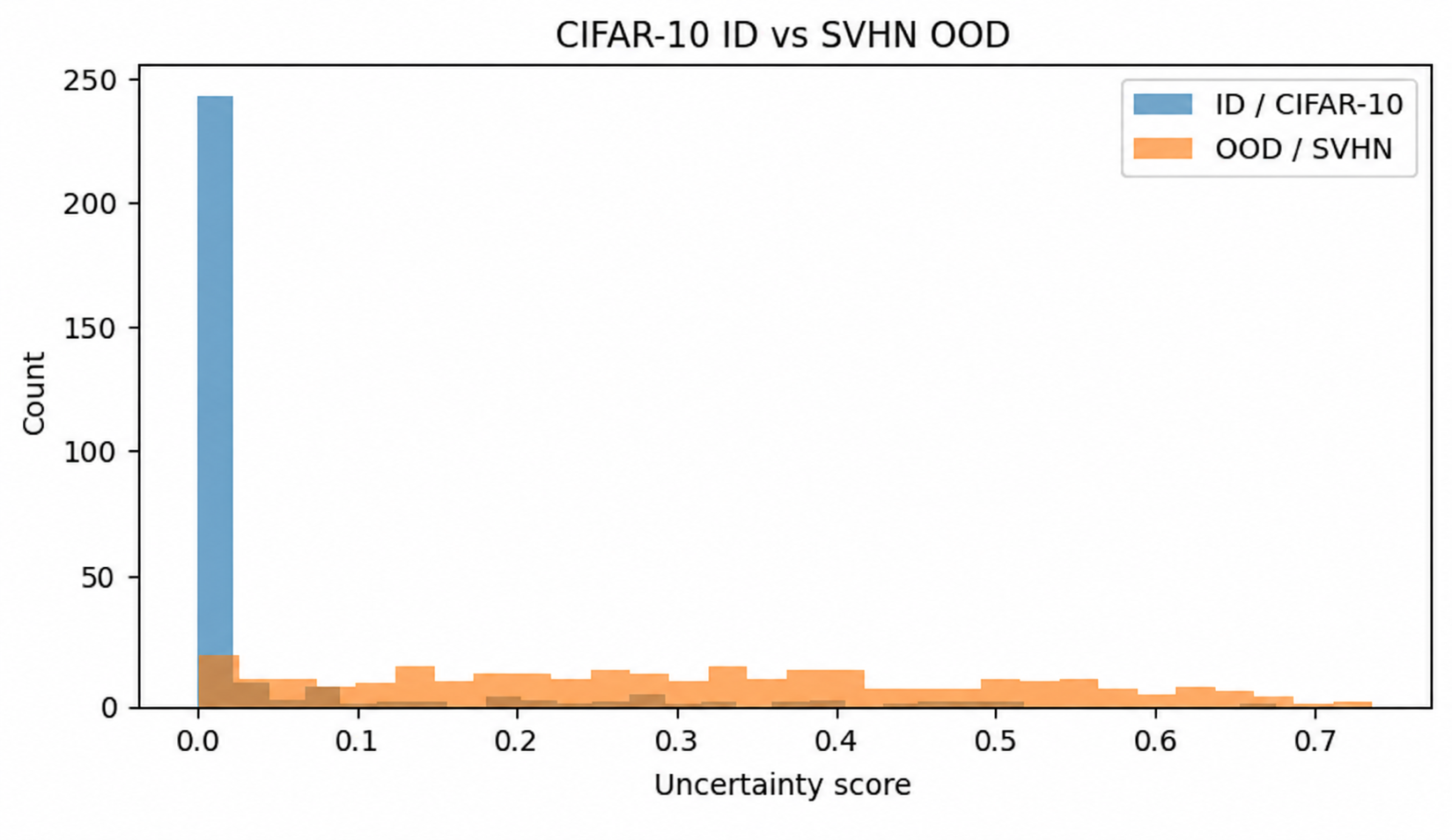}

\includegraphics[width=0.46\textwidth]{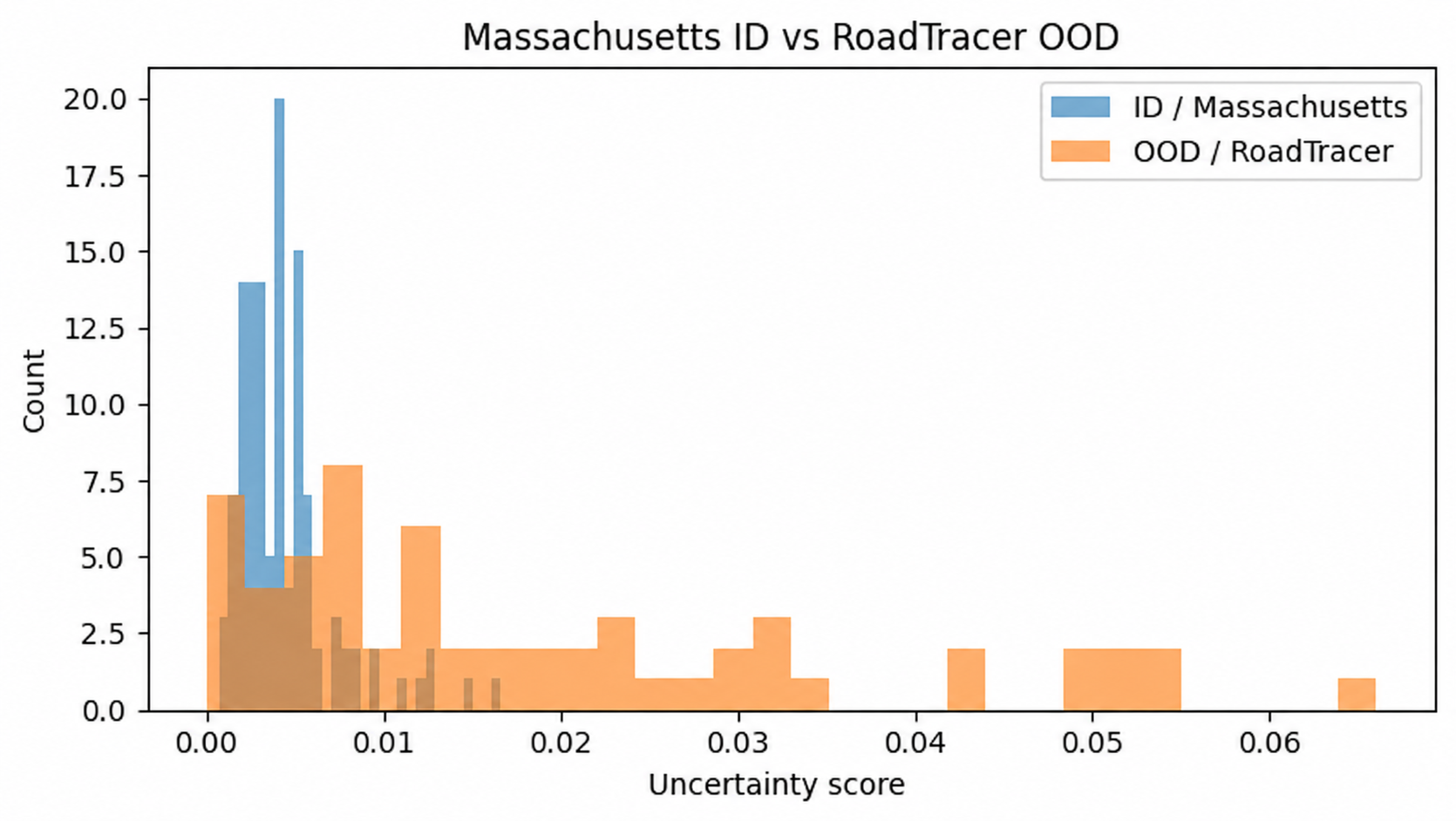}
\hfill
\includegraphics[width=0.46\textwidth]{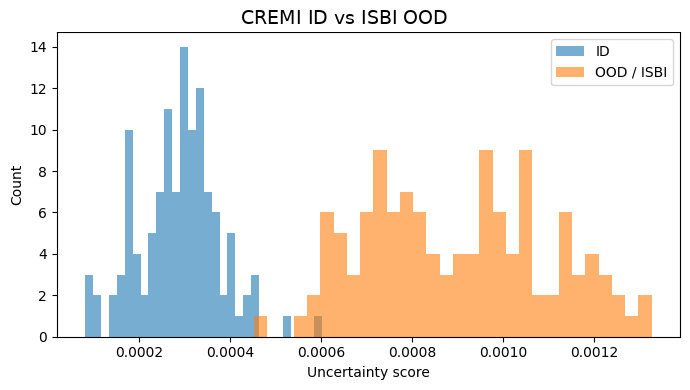}

\caption{
Uncertainty comparison of ID and OOD samples. 
ID scores are concentrated at lower uncertainty, while OOD scores are dispersed and shifted toward higher uncertainty.
}
\label{fig:ood_histograms}
\end{figure*}

\paragraph{}
Qualitative OOD segmentation examples are shown in Fig.~\ref{fig:isbi_ood_full_page}, where a model trained on CREMI is evaluated on ISBI 2012. 
Each row shows the original image, ground-truth mask, prediction, and variance-based uncertainty map. 
The uncertainty maps are concentrated around ambiguous membranes, boundary regions, and structures where the stochastic forward passes disagree. 
This provides spatial evidence that the model does not only assign a global OOD score, but also localizes where the cross-dataset shift creates uncertainty in the segmentation output.

\paragraph{}

\begin{figure*}[p]
\centering
\includegraphics[height=0.85\textheight,keepaspectratio]{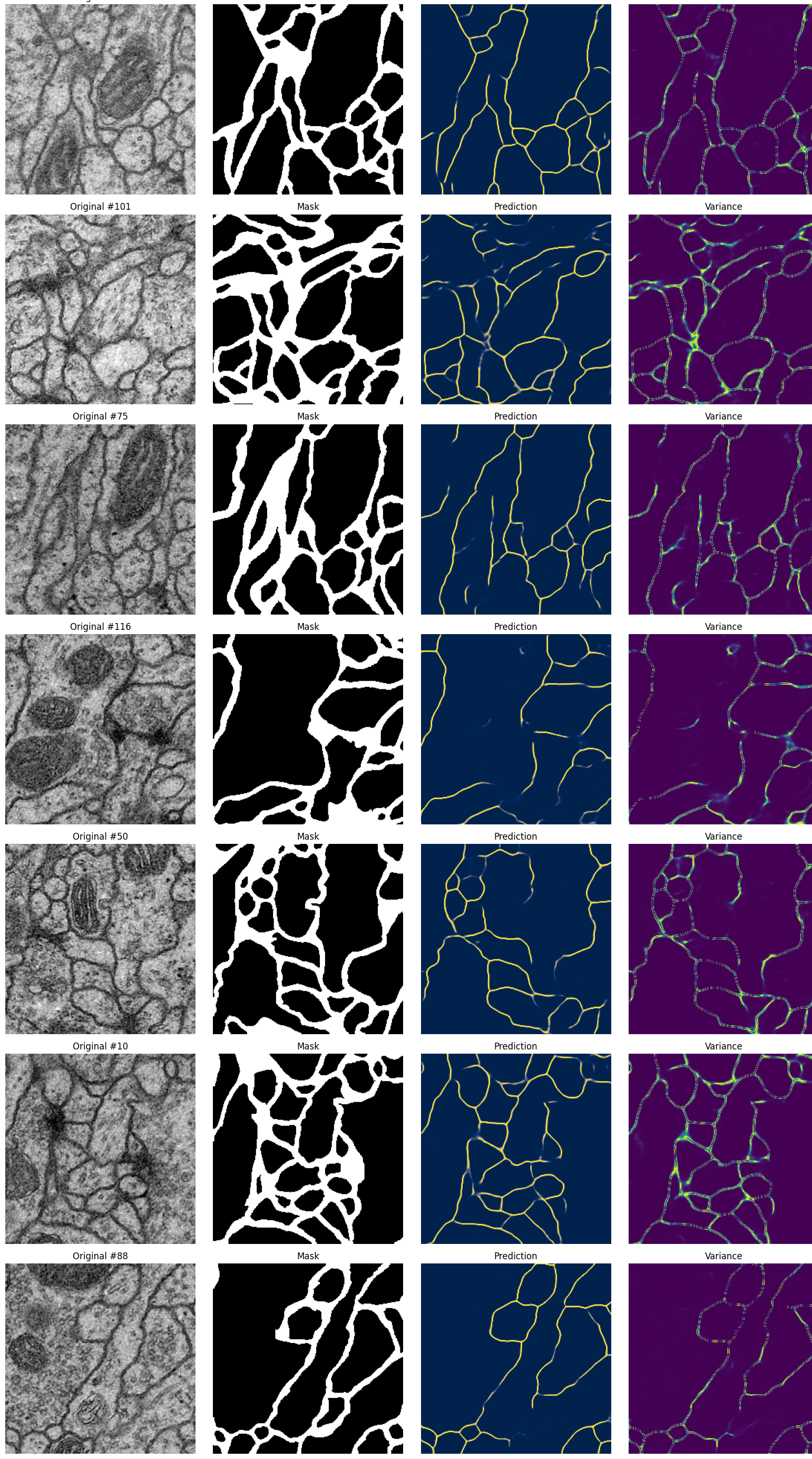}
\caption{
Qualitative OOD segmentation examples on ISBI 2012 using a model trained on CREMI. 
Each row shows the original image, ground-truth mask, prediction, and variance-based uncertainty map. 
}
\label{fig:isbi_ood_full_page}
\end{figure*}

%% file: bmvc_final.bbl
\begin{thebibliography}{67}
\providecommand{\natexlab}[1]{#1}
\providecommand{\url}[1]{\texttt{#1}}
\expandafter\ifx\csname urlstyle\endcsname\relax
  \providecommand{\doi}[1]{doi: #1}\else
  \providecommand{\doi}{doi: \begingroup \urlstyle{rm}\Url}\fi

\bibitem[Abdar et~al.(2021)Abdar, Pourpanah, Hussain, Rezazadegan, Liu,
  Ghavamzadeh, Fieguth, Cao, Khosravi, Acharya, et~al.]{abdar2021review}
Moloud Abdar, Farhad Pourpanah, Sadiq Hussain, Dana Rezazadegan, Li~Liu,
  Mohammad Ghavamzadeh, Paul Fieguth, Xiaochun Cao, Abbas Khosravi, U~Rajendra
  Acharya, et~al.
\newblock A review of uncertainty quantification in deep learning: Techniques,
  applications and challenges.
\newblock \emph{Information Fusion}, 76:\penalty0 243--297, 2021.

\bibitem[Bastani et~al.(2018)Bastani, He, Abbar, Alizadeh, Balakrishnan,
  Chawla, Madden, and DeWitt]{bastani2018roadtracerautomaticextractionroad}
Favyen Bastani, Songtao He, Sofiane Abbar, Mohammad Alizadeh, Hari
  Balakrishnan, Sanjay Chawla, Sam Madden, and David DeWitt.
\newblock {RoadTracer: Automatic Extraction of Road Networks from Aerial Images
  }.
\newblock In \emph{IEEE/CVF Conference on Computer Vision and Pattern
  Recognition}, 2018.

\bibitem[Bilic et~al.(2023)Bilic, Christ, Li, Vorontsov, Ben-Cohen, Kaissis,
  Szeskin, Jacobs, Mamani, Chartrand, et~al.]{bilic2023liver}
Patrick Bilic, Patrick Christ, Hongwei~Bran Li, Eugene Vorontsov, Avi
  Ben-Cohen, Georgios Kaissis, Adi Szeskin, Colin Jacobs, Gabriel
  Efrain~Humpire Mamani, Gabriel Chartrand, et~al.
\newblock The liver tumor segmentation benchmark (lits).
\newblock \emph{Medical Image Analysis}, 84:\penalty0 102680, 2023.

\bibitem[Blundell et~al.(2015)Blundell, Cornebise, Kavukcuoglu, and
  Wierstra]{blundell2015weight}
Charles Blundell, Julien Cornebise, Koray Kavukcuoglu, and Daan Wierstra.
\newblock Weight uncertainty in neural network.
\newblock In \emph{International Conference on Machine Learning}, 2015.

\bibitem[Brier(1950)]{glenn1950verification}
Glenn~W. Brier.
\newblock Verification of forecasts expressed in terms of probability.
\newblock \emph{Monthly Weather Review}, 78\penalty0 (1):\penalty0 1--3, 1950.

\bibitem[Cardona et~al.(2010)Cardona, Saalfeld, Preibisch, Schmid, Cheng,
  Pulokas, Tomancak, and Hartenstein]{cardona2010drosophila}
Albert Cardona, Stephan Saalfeld, Stephan Preibisch, Benjamin Schmid, Albert
  Cheng, John Pulokas, Pavel Tomancak, and Volker Hartenstein.
\newblock An integrated micro- and macroarchitectural analysis of the
  drosophila brain by computer-assisted serial section electron microscopy.
\newblock \emph{PLoS Biology}, 8\penalty0 (10):\penalty0 e1000502, 2010.

\bibitem[Choi et~al.(2026)Choi, Kim, Lee, jo, lee, and
  Ahn]{choi2026uncertainty}
Eunseo Choi, Ho-Yeon Kim, Jaewon Lee, Taeyong jo, Myungjun lee, and Heejin Ahn.
\newblock Uncertainty estimation via hyperspherical confidence mapping.
\newblock In \emph{International Conference on Learning Representations}, 2026.

\bibitem[Clevert et~al.(2016)Clevert, Unterthiner, and
  Hochreiter]{clevert2015fast}
Djork-Arn{\'e} Clevert, Thomas Unterthiner, and Sepp Hochreiter.
\newblock Fast and accurate deep network learning by exponential linear units
  (elus).
\newblock \emph{International Conference on Learning Representations}, 2016.

\bibitem[Conde et~al.(2022)Conde, Choi, Burchi, and Timofte]{conde2022swin2sr}
Marcos~V. Conde, Ui-Jin Choi, Maxime Burchi, and Radu Timofte.
\newblock Swin2sr: Swinv2 transformer for compressed image super-resolution and
  restoration.
\newblock In \emph{European Conference on Computer Vision}, 2022.

\bibitem[Daxberger et~al.(2021)Daxberger, Kristiadi, Immer, Eschenhagen, Bauer,
  and Hennig]{daxberger2021laplace}
Erik Daxberger, Agustinus Kristiadi, Alexander Immer, Runa Eschenhagen,
  Matthias Bauer, and Philipp Hennig.
\newblock Laplace redux-effortless bayesian deep learning.
\newblock \emph{Advances in Neural Information Processing Systems}, 2021.

\bibitem[Diederik(2014)]{diederik2014adam}
Kingma Diederik.
\newblock Adam: A method for stochastic optimization.
\newblock \emph{(No Title)}, 2014.

\bibitem[Duan et~al.(2024)Duan, Chapeau-Blondeau, and
  Abbott]{duan2024optimized}
Fabing Duan, Fran{\c{c}}ois Chapeau-Blondeau, and Derek Abbott.
\newblock Optimized injection of noise in activation functions to improve
  generalization of neural networks.
\newblock \emph{Chaos, Solitons \& Fractals}, 178:\penalty0 114363, 2024.

\bibitem[Durasov et~al.(2021)Durasov, Bagautdinov, Baque, and
  Fua]{durasov2021masksembles}
Nikita Durasov, Timur Bagautdinov, Pierre Baque, and Pascal Fua.
\newblock Masksembles for uncertainty estimation.
\newblock In \emph{IEEE/CVF Conference on Computer Vision and Pattern
  Recognition}, 2021.

\bibitem[Durasov et~al.(2024)Durasov, Oner, Donier, Le, and
  Fua]{durasov2024enabling}
Nikita Durasov, Doruk Oner, Jonathan Donier, Hieu Le, and Pascal Fua.
\newblock Enabling uncertainty estimation in iterative neural networks.
\newblock In \emph{International Conference on Machine Learning}, 2024.

\bibitem[Elfwing et~al.(2018)Elfwing, Uchibe, and Doya]{elfwing2018sigmoid}
Stefan Elfwing, Eiji Uchibe, and Kenji Doya.
\newblock Sigmoid-weighted linear units for neural network function
  approximation in reinforcement learning.
\newblock \emph{Neural Networks}, 107:\penalty0 3--11, 2018.

\bibitem[Ersoy and Erkmen(2021)]{Electrica_2021_486148}
Durmuş Ersoy and Burcu Erkmen.
\newblock A stochastic computing method for generating activation functions in
  multilayer feedforward neural networks.
\newblock \emph{Electrica}, 21\penalty0 (3):\penalty0 376--387, 2021.

\bibitem[Fraz et~al.(2012)Fraz, Remagnino, Hoppe, Uyyanonvara, Rudnicka, Owen,
  and Barman]{fraz2012ensemble}
Muhammad~Moazam Fraz, Paolo Remagnino, Andreas Hoppe, Bunyarit Uyyanonvara,
  Alicja~R Rudnicka, Christopher~G Owen, and Sarah~A Barman.
\newblock An ensemble classification-based approach applied to retinal blood
  vessel segmentation.
\newblock \emph{IEEE Transactions on Biomedical Engineering}, 59\penalty0
  (9):\penalty0 2538--2548, 2012.

\bibitem[Funke et~al.(2018)Funke, Tschopp, Grisaitis, Sheridan, Singh,
  Saalfeld, and Turaga]{funke2018large}
Jan Funke, Fabian Tschopp, William Grisaitis, Arlo Sheridan, Chandan Singh,
  Stephan Saalfeld, and Srinivas~C Turaga.
\newblock Large scale image segmentation with structured loss based deep
  learning for connectome reconstruction.
\newblock \emph{IEEE Transactions on Pattern Analysis and Machine
  Intelligence}, 41\penalty0 (7):\penalty0 1669--1680, 2018.

\bibitem[Gal and Ghahramani(2016)]{gal2016dropout}
Yarin Gal and Zoubin Ghahramani.
\newblock Dropout as a bayesian approximation: Representing model uncertainty
  in deep learning.
\newblock In \emph{International Conference on Machine Learning}, 2016.

\bibitem[Gal et~al.(2017)Gal, Hron, and Kendall]{gal2017concrete}
Yarin Gal, Jiri Hron, and Alex Kendall.
\newblock Concrete dropout.
\newblock \emph{Advances in Neural Information Processing Systems}, 2017.

\bibitem[Gast and Roth(2018)]{gast2018lightweight}
Jochen Gast and Stefan Roth.
\newblock Lightweight probabilistic deep networks.
\newblock In \emph{IEEE/CVF Conference on Computer Vision and Pattern
  Recognition}, 2018.

\bibitem[Goodfellow et~al.(2016)Goodfellow, Bengio, and
  Courville]{goodfellow2016deep}
Ian Goodfellow, Yoshua Bengio, and Aaron Courville.
\newblock \emph{Deep Learning}.
\newblock The MIT Press, 2016.

\bibitem[Gulcehre et~al.(2016)Gulcehre, Moczulski, Denil, and
  Bengio]{gulcehre2016noisy}
Caglar Gulcehre, Marcin Moczulski, Misha Denil, and Yoshua Bengio.
\newblock Noisy activation functions.
\newblock In \emph{International Conference on Machine Learning}, 2016.

\bibitem[Guo et~al.(2017)Guo, Pleiss, Sun, and Weinberger]{guo2017calibration}
Chuan Guo, Geoff Pleiss, Yu~Sun, and Kilian~Q. Weinberger.
\newblock On calibration of modern neural networks.
\newblock In \emph{International Conference on Machine Learning}, 2017.

\bibitem[Hammad(2024)]{hammad2024deep}
MM~Hammad.
\newblock Deep learning activation functions: fixed-shape, parametric,
  adaptive, stochastic, miscellaneous, non-standard, ensemble.
\newblock \emph{arXiv preprint arXiv:2407.11090}, 2024.

\bibitem[He et~al.(2016)He, Zhang, Ren, and Sun]{he2016deep}
Kaiming He, Xiangyu Zhang, Shaoqing Ren, and Jian Sun.
\newblock Deep residual learning for image recognition.
\newblock In \emph{Proceedings of the IEEE conference on computer vision and
  pattern recognition}, pages 770--778, 2016.

\bibitem[Hendrycks(2016)]{hendrycks2016gaussian}
D~Hendrycks.
\newblock Gaussian error linear units (gelus).
\newblock \emph{arXiv preprint arXiv:1606.08415}, 2016.

\bibitem[Hinton et~al.(2012)Hinton, Srivastava, Krizhevsky, Sutskever, and
  Salakhutdinov]{hinton2012improvingneuralnetworkspreventing}
Geoffrey~E. Hinton, Nitish Srivastava, Alex Krizhevsky, Ilya Sutskever, and
  Ruslan~R. Salakhutdinov.
\newblock Improving neural networks by preventing co-adaptation of feature
  detectors, 2012.

\bibitem[Holtzman et~al.(2020)Holtzman, Buys, Du, Forbes, and
  Choi]{holtzman2019curious}
Ari Holtzman, Jan Buys, Li~Du, Maxwell Forbes, and Yejin Choi.
\newblock The curious case of neural text degeneration.
\newblock In \emph{International Conference on Learning Representations}, 2020.

\bibitem[Huang et~al.(2017)Huang, Li, Pleiss, Liu, Hopcroft, and
  Weinberger]{huang2017snapshot}
Gao Huang, Yixuan Li, Geoff Pleiss, Zhuang Liu, John~E. Hopcroft, and Kilian~Q.
  Weinberger.
\newblock Snapshot ensembles: Train 1, get m for free.
\newblock In \emph{International Conference on Learning Representations}, 2017.

\bibitem[Kendall and Gal(2017)]{kendall2017uncertainties}
Alex Kendall and Yarin Gal.
\newblock What uncertainties do we need in bayesian deep learning for computer
  vision?
\newblock In \emph{Advances in Neural Information Processing Systems}, 2017.

\bibitem[Krizhevsky and Hinton(2009)]{krizhevsky2009learning}
Alex Krizhevsky and Geoffrey Hinton.
\newblock Learning multiple layers of features from tiny images.
\newblock Technical report, University of Toronto, 2009.

\bibitem[Krizhevsky et~al.(2012)Krizhevsky, Sutskever, and
  Hinton]{krizhevsky2012imagenet}
Alex Krizhevsky, Ilya Sutskever, and Geoffrey~E Hinton.
\newblock Imagenet classification with deep convolutional neural networks.
\newblock \emph{Advances in Neural Information Processing Systems}, 25, 2012.

\bibitem[Lakshminarayanan et~al.(2017)Lakshminarayanan, Pritzel, and
  Blundell]{lakshminarayanan2017simple}
Balaji Lakshminarayanan, Alexander Pritzel, and Charles Blundell.
\newblock Simple and scalable predictive uncertainty estimation using deep
  ensembles.
\newblock In \emph{Advances in Neural Information Processing Systems}, 2017.

\bibitem[LeCun et~al.(2002)LeCun, Bottou, Orr, and
  M{\"u}ller]{lecun2002efficient}
Yann LeCun, L{\'e}on Bottou, Genevieve~B Orr, and Klaus-Robert M{\"u}ller.
\newblock Efficient backprop.
\newblock In \emph{Neural Networks: Tricks of the Trade}, pages 9--50.
  Springer, 2002.

\bibitem[Liu et~al.(2020)Liu, Lin, Padhy, Tran, Bedrax~Weiss, and
  Lakshminarayanan]{liu2020simple}
Jeremiah Liu, Zi~Lin, Shreyas Padhy, Dustin Tran, Tania Bedrax~Weiss, and
  Balaji Lakshminarayanan.
\newblock Simple and principled uncertainty estimation with deterministic deep
  learning via distance awareness.
\newblock In \emph{Advances in Neural Information Processing Systems}, 2020.

\bibitem[Liu et~al.(2022)Liu, Hu, Lin, Yao, Xie, Wei, Ning, Cao, Zhang, Dong,
  et~al.]{liu2022swin}
Ze~Liu, Han Hu, Yutong Lin, Zhuliang Yao, Zhenda Xie, Yixuan Wei, Jia Ning, Yue
  Cao, Zheng Zhang, Li~Dong, et~al.
\newblock Swin transformer v2: Scaling up capacity and resolution.
\newblock In \emph{IEEE/CVF Conference on Computer Vision and Pattern
  Recognition}, 2022.

\bibitem[Lomeli et~al.(2025)Lomeli, Douze, Szilvasy, Cabannes, Copet,
  Sukhbaatar, Weston, Synnaeve, Mazar{\'e}, and
  J{\'e}gou]{lomeli2025stochastic}
Maria Lomeli, Matthijs Douze, Gergely Szilvasy, Loic Cabannes, Jade Copet,
  Sainbayar Sukhbaatar, Jason Weston, Gabriel Synnaeve, Pierre-Emmanuel
  Mazar{\'e}, and Herv{\'e} J{\'e}gou.
\newblock Stochastic activations.
\newblock \emph{arXiv preprint arXiv:2509.22358}, 2025.

\bibitem[Loquercio et~al.(2020)Loquercio, Segu, and
  Scaramuzza]{loquercio2020general}
Antonio Loquercio, Mattia Segu, and Davide Scaramuzza.
\newblock A general framework for uncertainty estimation in deep learning.
\newblock \emph{IEEE Robotics and Automation Letters}, 5\penalty0 (2):\penalty0
  3153--3160, 2020.

\bibitem[Loshchilov and Hutter(2017)]{loshchilov2017decoupled}
Ilya Loshchilov and Frank Hutter.
\newblock Decoupled weight decay regularization.
\newblock \emph{arXiv preprint arXiv:1711.05101}, 2017.

\bibitem[Malinin and Gales(2018)]{malinin2018predictive}
Andrey Malinin and Mark Gales.
\newblock Predictive uncertainty estimation via prior networks.
\newblock In \emph{Advances in Neural Information Processing Systems}, 2018.

\bibitem[Minaee et~al.(2021)Minaee, Boykov, Porikli, Plaza, Kehtarnavaz, and
  Terzopoulos]{minaee2021image}
Shervin Minaee, Yuri Boykov, Fatih Porikli, Antonio Plaza, Nasser Kehtarnavaz,
  and Demetri Terzopoulos.
\newblock Image segmentation using deep learning: A survey.
\newblock \emph{IEEE Transactions on Pattern Analysis and Machine
  Intelligence}, 44\penalty0 (7):\penalty0 3523--3542, 2021.

\bibitem[Misra(2020)]{misra2019mish}
Diganta Misra.
\newblock Mish: A self regularized non-monotonic activation function.
\newblock In \emph{British Machine Vision Conference}, 2020.

\bibitem[Mnih(2013)]{MnihThesis}
Volodymyr Mnih.
\newblock \emph{Machine Learning for Aerial Image Labeling}.
\newblock PhD thesis, University of Toronto, 2013.

\bibitem[Mobiny et~al.(2021)Mobiny, Yuan, Moulik, Garg, Wu, and
  Van~Nguyen]{mobiny2021dropconnect}
Aryan Mobiny, Pengyu Yuan, Supratik~K Moulik, Naveen Garg, Carol~C Wu, and Hien
  Van~Nguyen.
\newblock Dropconnect is effective in modeling uncertainty of bayesian deep
  networks.
\newblock \emph{Scientific Reports}, 11\penalty0 (1):\penalty0 5458, 2021.

\bibitem[Mukhoti et~al.(2021)Mukhoti, Dokania, Torr, and Gal]{mukhoti2020batch}
Jishnu Mukhoti, Puneet~K. Dokania, Philip Torr, and Yarin Gal.
\newblock On batch normalisation for approximate bayesian inference.
\newblock In \emph{Third Symposium on Advances in Approximate Bayesian
  Inference}, 2021.

\bibitem[Mukhoti et~al.(2023)Mukhoti, Kirsch, Van~Amersfoort, Torr, and
  Gal]{mukhoti2023deep}
Jishnu Mukhoti, Andreas Kirsch, Joost Van~Amersfoort, Philip~HS Torr, and Yarin
  Gal.
\newblock Deep deterministic uncertainty: A new simple baseline.
\newblock In \emph{IEEE/CVF Conference on Computer Vision and Pattern
  Recognition}, 2023.

\bibitem[Naeini et~al.(2015)Naeini, Cooper, and
  Hauskrecht]{naeini2015obtaining}
Mahdi~Pakdaman Naeini, Gregory Cooper, and Milos Hauskrecht.
\newblock Obtaining well calibrated probabilities using bayesian binning.
\newblock In \emph{AAAI conference on artificial intelligence}, 2015.

\bibitem[Netzer et~al.(2011)Netzer, Wang, Coates, Bissacco, Wu, Ng,
  et~al.]{netzer2011reading}
Yuval Netzer, Tao Wang, Adam Coates, Alessandro Bissacco, Baolin Wu, Andrew~Y
  Ng, et~al.
\newblock Reading digits in natural images with unsupervised feature learning.
\newblock In \emph{NIPS workshop on deep learning and unsupervised feature
  learning}, volume~5, page~7. Granada, 2011.

\bibitem[Ovadia et~al.(2019)Ovadia, Fertig, Ren, Nado, Sculley, Nowozin,
  Dillon, Lakshminarayanan, and Snoek]{ovadia2019can}
Yaniv Ovadia, Emily Fertig, Jie Ren, Zachary Nado, David Sculley, Sebastian
  Nowozin, Joshua Dillon, Balaji Lakshminarayanan, and Jasper Snoek.
\newblock Can you trust your model's uncertainty? evaluating predictive
  uncertainty under dataset shift.
\newblock \emph{Advances in Neural Information Processing Systems}, 2019.

\bibitem[Redmon et~al.(2016)Redmon, Divvala, Girshick, and
  Farhadi]{redmon2016lookonceunifiedrealtime}
Joseph Redmon, Santosh Divvala, Ross Girshick, and Ali Farhadi.
\newblock You only look once: Unified, real-time object detection.
\newblock In \emph{IEEE/CVF Conference on Computer Vision and Pattern
  Recognition}, 2016.

\bibitem[Ren et~al.(2015)Ren, He, Girshick, and
  Sun]{ren2016fasterrcnnrealtimeobject}
Shaoqing Ren, Kaiming He, Ross Girshick, and Jian Sun.
\newblock Faster r-cnn: Towards real-time object detection with region proposal
  networks.
\newblock In \emph{Advances in Neural Information Processing Systems}, 2015.

\bibitem[Ronneberger et~al.(2015)Ronneberger, Fischer, and
  Brox]{ronneberger2015unetconvolutionalnetworksbiomedical}
Olaf Ronneberger, Philipp Fischer, and Thomas Brox.
\newblock U-net: Convolutional networks for biomedical image segmentation.
\newblock In \emph{Medical Image Computing and Computer-Assisted Intervention},
  2015.

\bibitem[Staal et~al.(2004)Staal, Abr{\`a}moff, Niemeijer, Viergever, and
  Van~Ginneken]{staal2004ridge}
Joes Staal, Michael~D Abr{\`a}moff, Meindert Niemeijer, Max~A Viergever, and
  Bram Van~Ginneken.
\newblock Ridge-based vessel segmentation in color images of the retina.
\newblock \emph{IEEE Transactions on Medical Imaging}, 23\penalty0
  (4):\penalty0 501--509, 2004.

\bibitem[Stoyanova et~al.(2023)Stoyanova, Ghandi, and
  Tavakol]{stoyanova2023toward}
Yana Stoyanova, Soroush Ghandi, and Maryam Tavakol.
\newblock Toward robust uncertainty estimation with random activation
  functions.
\newblock In \emph{Proceedings of the AAAI Conference on Artificial
  Intelligence}, volume~37, pages 15152--15160, 2023.

\bibitem[Teerapittayanon et~al.(2016)Teerapittayanon, McDanel, and
  Kung]{teerapittayanon2016branchynet}
Surat Teerapittayanon, Bradley McDanel, and Hsiang-Tsung Kung.
\newblock Branchynet: Fast inference via early exiting from deep neural
  networks.
\newblock In \emph{International Conference on Pattern Recognition}, 2016.

\bibitem[Van~Amersfoort et~al.(2020)Van~Amersfoort, Smith, Teh, and
  Gal]{van2020uncertainty}
Joost Van~Amersfoort, Lewis Smith, Yee~Whye Teh, and Yarin Gal.
\newblock Uncertainty estimation using a single deep deterministic neural
  network.
\newblock In \emph{International Conference on Machine Learning}, 2020.

\bibitem[Vaswani et~al.(2017)Vaswani, Shazeer, Parmar, Uszkoreit, Jones, Gomez,
  Kaiser, and Polosukhin]{vaswani2017attention}
Ashish Vaswani, Noam Shazeer, Niki Parmar, Jakob Uszkoreit, Llion Jones,
  Aidan~N Gomez, {\L}ukasz Kaiser, and Illia Polosukhin.
\newblock Attention is all you need.
\newblock \emph{Advances in Neural Information Processing Systems}, 2017.

\bibitem[Vinyals et~al.(2016)Vinyals, Blundell, Lillicrap, Wierstra,
  et~al.]{vinyals2016matching}
Oriol Vinyals, Charles Blundell, Timothy Lillicrap, Daan Wierstra, et~al.
\newblock Matching networks for one shot learning.
\newblock \emph{Advances in Neural Information Processing Systems}, 2016.

\bibitem[Wang et~al.(2019)Wang, Li, Aertsen, Deprest, Ourselin, and
  Vercauteren]{wang2019aleatoric}
Guotai Wang, Wenqi Li, Michael Aertsen, Jan Deprest, S{\'e}bastien Ourselin,
  and Tom Vercauteren.
\newblock Aleatoric uncertainty estimation with test-time augmentation for
  medical image segmentation with convolutional neural networks.
\newblock \emph{Neurocomputing}, 338:\penalty0 34--45, 2019.

\bibitem[Wen et~al.(2020)Wen, Tran, and Ba]{wen2020batchensemble}
Yeming Wen, Dustin Tran, and Jimmy Ba.
\newblock Batchensemble: an alternative approach to efficient ensemble and
  lifelong learning.
\newblock In \emph{International Conference on Learning Representations}, 2020.

\bibitem[Wiedemann et~al.(1998)Wiedemann, Heipke, Mayer, and
  Jamet]{wiedemann1998empirical}
Christian Wiedemann, Christian Heipke, Helmut Mayer, and Olivier Jamet.
\newblock Empirical evaluation of automatically extracted road axes.
\newblock \emph{Empirical evaluation techniques in computer vision},
  12:\penalty0 172--187, 1998.

\bibitem[Wu et~al.(2021)Wu, Li, Wang, Meng, Qin, Chen, Zhang, Liu,
  et~al.]{wu2021r}
Lijun Wu, Juntao Li, Yue Wang, Qi~Meng, Tao Qin, Wei Chen, Min Zhang, Tie-Yan
  Liu, et~al.
\newblock R-drop: Regularized dropout for neural networks.
\newblock \emph{Advances in Neural Information Processing Systems}, 2021.

\bibitem[Xu et~al.(2015)Xu, Wang, Chen, and Li]{xu2015empirical}
Bing Xu, Naiyan Wang, Tianqi Chen, and Mu~Li.
\newblock Empirical evaluation of rectified activations in convolutional
  network.
\newblock \emph{arXiv preprint arXiv:1505.00853}, 2015.

\bibitem[Yang et~al.(2021)Yang, Shi, and Ni]{yang2021medmnist}
Jiancheng Yang, Rui Shi, and Bingbing Ni.
\newblock Medmnist classification decathlon: A lightweight automl benchmark for
  medical image analysis.
\newblock In \emph{IEEE 18th International Symposium on Biomedical Imaging},
  2021.

\bibitem[Yang et~al.(2023)Yang, Shi, Wei, Liu, Zhao, Ke, Pfister, and
  Ni]{yang2023medmnist}
Jiancheng Yang, Rui Shi, Donglai Wei, Zequan Liu, Lin Zhao, Bilian Ke,
  Hanspeter Pfister, and Bingbing Ni.
\newblock Medmnist v2-a large-scale lightweight benchmark for 2d and 3d
  biomedical image classification.
\newblock \emph{Scientific Data}, 10\penalty0 (1):\penalty0 41, 2023.

\bibitem[Zeevi et~al.(2025)Zeevi, Shwartz-Ziv, LeCun, Staib, and
  Onofrey]{zeevi2025rate}
Tal Zeevi, Ravid Shwartz-Ziv, Yann LeCun, Lawrence~H Staib, and John~A Onofrey.
\newblock Rate-in: Information-driven adaptive dropout rates for improved
  inference-time uncertainty estimation.
\newblock In \emph{IEEE/CVF Conference on Computer Vision and Pattern
  Recognition}, pages 20757--20766, 2025.

\end{thebibliography}
